\documentclass[twoside]{article}

\usepackage{amsmath}
\usepackage{amsfonts}
\usepackage{amsthm}
\usepackage{bm}
\usepackage{booktabs} 
\usepackage[makeroom]{cancel}
\usepackage[inline]{enumitem}
\usepackage{graphicx}
\usepackage[dvipsnames]{xcolor} 

\usepackage{hyperref}

\usepackage[capitalise]{cleveref}

\hypersetup{
    linkcolor=blue,
    filecolor=magenta,
    urlcolor=cyan,
    pdfpagemode=FullScreen,
}

\newlist{enuminline}{enumerate*}{1}
\setlist[enuminline]{label=\textbf{\arabic*})}

\DeclareMathOperator*{\argmax}{arg\,max}

\DeclareMathOperator{\vect}{vec}
\DeclareMathOperator{\unvec}{unvec}
\DeclareMathOperator{\tril}{tril}
\DeclareMathOperator{\triu}{triu}
\DeclareMathOperator{\diag}{diag}
\DeclareMathOperator{\diff}{d}
\DeclareMathOperator{\tr}{tr}
\DeclareMathOperator{\chol}{chol}
\DeclareMathOperator{\KL}{KL}

\newtheorem{proposition}{Proposition}

\usepackage[accepted]{aistats2026}
\usepackage[round]{natbib}

\begin{document}

%

%

\twocolumn[

\aistatstitle{Inverse-Free Sparse Variational Gaussian Processes}

\aistatsauthor{ Stefano Cortinovis \And Laurence Aitchison \And Stefanos Eleftheriadis \And  Mark van der Wilk }

\aistatsaddress{ University of Oxford \And  University of Bristol \And No affiliation \And University of Oxford } ]

\begin{abstract}
  Gaussian processes (GPs) offer appealing properties but are costly to train at scale.
  Sparse variational GP (SVGP) approximations reduce cost yet still rely on Cholesky decompositions of kernel matrices, ill-suited to low-precision, massively parallel hardware.
  While one can construct valid variational bounds that rely only on matrix multiplications (matmuls) via an auxiliary matrix parameter, optimising them with off-the-shelf first-order methods is challenging.
  We make the inverse-free approach practical by proposing a better-conditioned bound and deriving a matmul-only natural-gradient update for the auxiliary parameter, markedly improving stability and convergence.
  We further provide simple heuristics, such as step-size schedules and stopping criteria, that make the overall optimisation routine fit seamlessly into existing workflows.
  Across regression and classification benchmarks, we demonstrate that our method
  \begin{enuminline}
    \item serves as a drop-in replacement in SVGP-based models (e.g., deep GPs),
    \item recovers similar performance to traditional methods, and
    \item can be faster than baselines when well tuned.
  \end{enuminline}
\end{abstract}

\section{INTRODUCTION}\label{sec:introduction}
Gaussian processes (GPs) \citep{williams2006gaussian} are flexible Bayesian priors over functions, valued for uncertainty that takes into account infinite basis functions, and automatic model selection.
These benefits come with a cubic training cost $\mathcal{O}(N^3)$ in the number of data points $N$, which limits their use at scale.
Sparse variational approximations \citep{titsias2009variational} replace full GP inference with a variational posterior over $M$ pseudo-inputs, called \textit{inducing points}, reducing the cost of full-batch training to $\mathcal{O}(NM^2 + M^3)$, so that the only cubic dependence is on $M$ rather than $N$.
Stochastic optimisation \citep{hensman2013gaussian} further reduces the per-iteration cost to $\mathcal{O}(BM^2 + M^3)$ with mini-batches of size $B \ll N$.

While these advances enable arbitrarily accurate approximations with \(M \ll N\) \citep{burt2019rates,burt2020convergence}, they still require computing the inverse and determinant of an \(M\times M\) kernel matrix.
In practice, these are obtained via Cholesky decompositions, which rely on inherently sequential linear algebra and high-precision arithmetic, poorly matched to modern accelerators optimised for low-precision, massively parallel matrix multiplications (matmuls).
To avoid decompositions, \citet{vdwilk2020variational,vdwilk2022improved} derived an inverse-free variational bound that replaces them with matmuls by introducing an auxiliary parameter $\mathbf T \in \mathbb{R}^{M\times M}$ whose optimum recovers the required inverse, but this objective can be difficult to optimise with off-the-shelf first-order methods such as Adam \citep{kingma2014adam}, leading to instability and slow convergence (\cref{fig:snelson_banana_trace}).
\begin{figure*}
    \centering
    \includegraphics[width=1.0\textwidth]{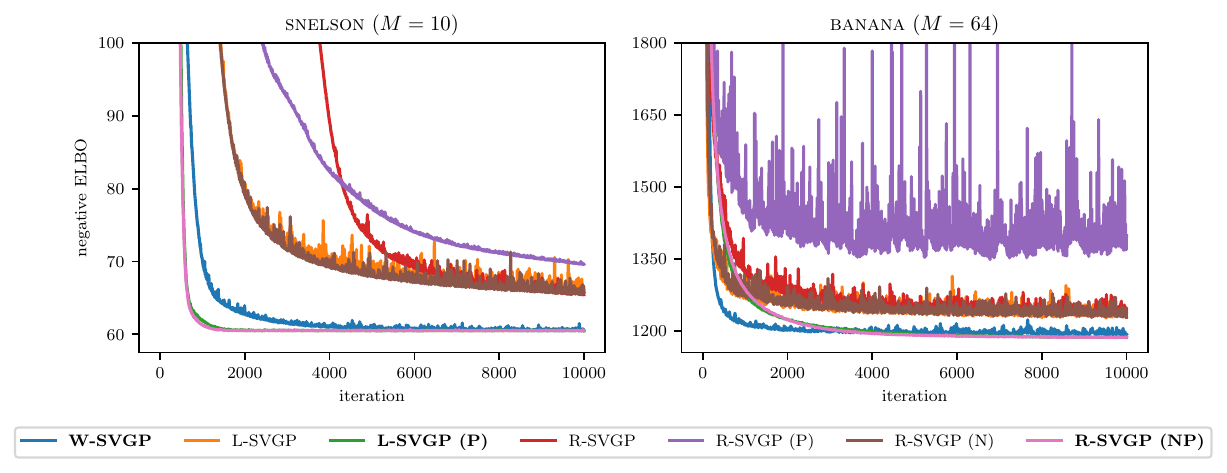}
    \caption{Loss traces on \textsc{snelson} and \textsc{banana} datasets. The \textsc{N} and \textsc{P} suffixes refer to the use of NG updates (\cref{sec:natgrad}) and inducing mean preconditioning (\cref{sec:preconditioning}), respectively. R-SVGP (NP) is the only inverse-free variant that matches the performance of W-SVGP and L-SVGP (P), which use Cholesky decompositions.}
    \label{fig:snelson_banana_trace}
\end{figure*}

In this work, we make the inverse-free approach practical (see \cref{fig:snelson_banana_trace}).
First, we show that, even with optimally tuned $\mathbf{T}$, a preconditioner on a variational parameter is needed to match the performance of standard SVGP methods.
Second, we derive a natural-gradient (NG) update for $\mathbf{T}$ that uses only matmuls, substantially improving stability and convergence.
Finally, we discuss a number of strategies, such as simple schedules and stopping rules, that make the $\mathbf{T}$ updates efficient and automatic.
The resulting SVGP objective
\begin{enuminline}
    \item can be \textit{evaluated} in quadratic time,
    \item serves as a drop-in replacement within complex SVGP-based models, and
    \item trains stably using only matmuls.
\end{enuminline}
While \textit{training} remains cubic in \(M\) due to the NG update for $\mathbf{T}$, the overall procedure is well-suited to parallelisation and low-precision arithmetic, as it only involves matmuls.
We evaluate our method on regression and classification benchmarks, where it achieves comparable performance to baselines that rely on Cholesky decompositions.

The remainder of the paper is organised as follows.
\Cref{sec:svgp} reviews SVGP parameterisations and the inverse-free bound of \citet{vdwilk2022improved}.
\Cref{sec:improving} presents our $\mathbf T$-tailored updates, preconditioning, and practical training strategies.
\Cref{sec:results} evaluates the proposed method on a variety of benchmarks.
\Cref{sec:related} briefly surveys related work, and \Cref{sec:discussion} discusses limitations and extensions.

\section{SVGP PARAMETERISATIONS}\label{sec:svgp}
We consider\footnote{The SVGP framework is presented for a shallow GP prior with scalar output, but the ideas presented here extend naturally to multi-output \citep{vdw2020framework} and deep GP models \citep{salimbeni2017doubly}.} the problem of learning a function $f\!:\! \mathbb{R}^D\!\to\! \mathbb{R}$ through Bayesian inference, with a prior $f(\cdot) \!\sim\! \mathcal{GP}(0, k_{\bm{\psi}}(\cdot, \cdot))$ and an arbitrary factorised likelihood with density $p(\mathbf y | \mathbf f) = \prod_{n=1}^N p(y_n | f(\mathbf x_n); \bm{\eta})$, where $\bm{\theta} = \{\bm{\psi}, \bm{\eta}\}$ are model hyperparameters\footnote{In the sequel, dependence of $k_{\bm{\psi}}(\cdot, \cdot)$ and $p(y_n | f(\mathbf{x_n}; \bm{\eta}))$ on $\bm{\psi}$ and $\bm{\eta}$ is suppressed for brevity.}.
To make predictions, we need to approximate the maximum marginal likelihood hyperparameters ${\bm{\theta}}^* = \argmax_{\bm{\theta}} \log p(\mathbf y|\bm{\theta})$, and the posterior $p(f | \mathbf{y})$.
To do so while avoiding the cost of full GP inference, one popular approach is to use sparse variational Gaussian processes (SVGPs) \citep{hensman2013gaussian}, which select an approximate posterior distribution $q(f)$ by minimising the KL-divergence to the true posterior $p(f | \mathbf{y})$.
The approximate posterior is constructed by conditioning the prior on $M \!\ll\! N$ inducing points \citep{quinonero2005unifying}, at input locations $\mathbf{Z}\!\in\!\mathbb{R}^{M \times D}$ and output values $\mathbf{u}\!=\! f(\mathbf{Z})$, resulting in a variational posterior $q(f) = \int p(f_{\neq \mathbf{u}} | \mathbf{u}) q(\mathbf{u}) \mathrm{d} \mathbf u$ (see  \citet{matthewsthesis,vdwthesis} for further details).
Following variational inference \citep{blei2017variational}, $\KL[q(f)\vert\vert p(f|\mathbf{y})]$ is minimised by maximising the evidence lower bound (ELBO) with respect to $\bm{\theta}$, $\mathbf Z$ and the parameters of $q(\mathbf u)$.
When $q(\mathbf{u})$ is chosen to be Gaussian, the ELBO becomes
\begin{equation}
    \mathcal{L} = \textstyle{\sum}_{n = 1}^N \mathbb{E}_{\mathcal{N}(\mu_n, \sigma_n^2)} \left[\log p(y_n | f(\mathbf{x}_n))\right] - \KL\left[q(\mathbf{u}) || p(\mathbf{u}) \right], \label{eq:elbo}
\end{equation}
where $\mu_n$ and $\sigma_n$ are, respectively, the predictive mean and variance at the input location $\mathbf{x}_n$.
When optimised over mini-batches of size $B$, the SVGP ELBO \eqref{eq:elbo} has time complexity $\mathcal{O}(B M^2 + M^3)$ and can be used to scale GP inference to large datasets under arbitrary likelihoods, mirroring stochastic optimisation in neural networks.
However, unlike neural networks, the computation of $\mu_n$, $\sigma_n^2$ and $\KL[q(\mathbf{u})||p(\mathbf{u})]$ involves a Cholesky decomposition, which is ill-suited to modern deep-learning hardware.
Furthermore, the choice of the parameterisation of the Gaussian distribution $q(\mathbf{u})$ can have a significant impact on the optimisation stability and the tightness of the ELBO.
Below, we briefly review three common parameterisations, as well as their relationship with the inverse-free bound of \citet{vdwilk2022improved}, highlighting in \textcolor{red}{red} the terms that involve matrix decompositions.

\paragraph{Marginal Parameterisation (M-SVGP).}
The most common freeform $q(\mathbf{u}) = \mathcal{N}(\mathbf{m}, \mathbf{S})$ results in
{
    \allowdisplaybreaks
    \begin{align}
        \mu_n &= \mathbf{k}_{n \mathbf{u}} \textcolor{red}{\mathbf{K}_{\mathbf{u} \mathbf{u}}^{-1}} \mathbf{m} \nonumber \\
        \sigma_n^2 &= k_{nn} - \mathbf{k}_{n \mathbf{u}} \textcolor{red}{\mathbf{K}_{\mathbf{u} \mathbf{u}}^{-1}} \mathbf{k}_{\mathbf{u} n} + \mathbf{k}_{n \mathbf{u}} \textcolor{red}{\mathbf{K}_{\mathbf{u} \mathbf{u}}^{-1}} \mathbf{S} \textcolor{red}{\mathbf{K}_{\mathbf{u} \mathbf{u}}^{-1}} \mathbf{k}_{\mathbf{u} n}, \nonumber \\
        \KL&\left[q(\mathbf{u}) || p(\mathbf{u}) \right] = \frac{1}{2}\left(\text{tr}(\textcolor{red}{\mathbf{K}_{\mathbf{u} \mathbf{u}}^{-1}} \mathbf{S}) + \mathbf{m}^\top \textcolor{red}{\mathbf{K}_{\mathbf{u} \mathbf{u}}^{-1}} \mathbf{m} - M\right. \nonumber  \\
        &\qquad\qquad\qquad\qquad\left.+ \textcolor{red}{\log |\mathbf{K}_{\mathbf{u} \mathbf{u}}|} - \log |\mathbf{S}|\right), \nonumber
    \end{align}
}%
where $\mathbf{K}_{\mathbf{u} \mathbf{u}} = k(\mathbf{Z}, \mathbf{Z})$, $\mathbf{k}_{\mathbf{u} n} = \mathbf{k}_{n \mathbf{u}}^\top = k(\mathbf{Z}, \mathbf{x}_n)$, and $k_{nn} = k(\mathbf{x}_n, \mathbf{x}_n)$.

\paragraph{Whitened Parameterisation (W-SVGP).}
\citet{hensman2015mcmc} propose to reparameterise $q(\mathbf{u}) = \mathcal{N}(\mathbf{m}, \mathbf{S})$ with $\mathbf{m} = \mathbf{L}_{\mathbf{u}\mathbf{u}} \tilde{\mathbf{m}}$ and $\mathbf{S} = \mathbf{L}_{\mathbf{u}\mathbf{u}} \tilde{\mathbf{S}} \mathbf{L}_{\mathbf{u}\mathbf{u}}^\top$, where $\mathbf{L}_{\mathbf{u}\mathbf{u}} = \chol(\mathbf{K}_{\mathbf{u}\mathbf{u}})$ is the Cholesky factor of $\mathbf{K}_{\mathbf{u}\mathbf{u}}$ and $\tilde{\mathbf{S}}$ is PD.
This results in
\begin{align}
    \mu_n &= \mathbf{k}_{n \mathbf{u}} \textcolor{red}{\mathbf{L}_{\mathbf{u} \mathbf{u}}^{-\top}} \tilde{\mathbf{m}} \nonumber \\
    \sigma_n^2 &= k_{nn} - \mathbf{k}_{n \mathbf{u}} \textcolor{red}{\mathbf{K}_{\mathbf{u} \mathbf{u}}^{-1}} \mathbf{k}_{\mathbf{u} n} + \mathbf{k}_{n \mathbf{u}} \textcolor{red}{\mathbf{L}_{\mathbf{u} \mathbf{u}}^{-\top}} \tilde{\mathbf{S}} \textcolor{red}{\mathbf{L}_{\mathbf{u} \mathbf{u}}^{-1}} \mathbf{k}_{\mathbf{u} n}, \nonumber \\
    \KL&\left[q(\mathbf{u}) || p(\mathbf{u}) \right] = \KL[\mathcal{N}(\tilde{\mathbf{m}}, \tilde{\mathbf{S}}) || \mathcal{N}(\mathbf{0}, \mathbf{I})]. \nonumber
\end{align}
This is a form of whitening of the inducing variable $\mathbf{u}$, which often improves optimisation and is used as the default SVGP parameterisation in popular GP packages \citep[e.g.,][]{matthews2017gpflow}.

\paragraph{Likelihood Parameterisation (L-SVGP).}
\citet{panosFullyScalableGaussian2018} suggest to reparameterise $\mathbf{m} = \mathbf{K}_{\mathbf{u} \mathbf{u}} \tilde{\mathbf{m}}$ and $\mathbf{S} = (\mathbf{K}_{\mathbf{u}\mathbf{u}}^{-1} + \tilde{\mathbf{S}}^{-1})^{-1}$, where $\tilde{\mathbf{S}}$ is PD diagonal. This results in
\begin{align}
    &\mu_n = \mathbf{k}_{n \mathbf{u}} \tilde{\mathbf{m}}, \quad \sigma_n^2 = k_{nn} - \mathbf{k}_{n \mathbf{u}} \textcolor{red}{\tilde{\mathbf{K}}^{-1}} \mathbf{k}_{\mathbf{u} n} =: \sigma_n^{2(\text{L})}, \label{eq:pv} \\
    &\KL\left[q(\mathbf{u}) || p(\mathbf{u})\right] = \frac{1}{2}\left(-\text{tr}(\textcolor{red}{\tilde{\mathbf{K}}^{-1}} \mathbf{K}_{\mathbf{u} \mathbf{u}}) + \tilde{\mathbf{m}}^\top \mathbf{K}_{\mathbf{u} \mathbf{u}} \tilde{\mathbf{m}} \right. \nonumber \\
    &\qquad\qquad\qquad\qquad\quad\ \ \left.+ \textcolor{red}{\log |\tilde{\mathbf{K}}|} - \log |\tilde{\mathbf{S}}|\right), \nonumber
\end{align}
where $\tilde{\mathbf{K}} = \mathbf{K}_{\mathbf{u} \mathbf{u}} + \tilde{\mathbf{S}}$. Inverting $\tilde{\mathbf{K}}$ instead of $\mathbf K$ is more stable without the need for \emph{jitter} terms, due to having lower-bounded minimum eigenvalue thanks to $\tilde{\mathbf{S}}$.
Moreover, further manipulations of the L-SVGP lead to an inverse-free bound, as we discuss next.

\paragraph{Inverse-Free Parameterisation (R-SVGP).} \Citet{vdwilk2022improved} develop an inverse-free bound for L-SVGP, starting by upper bounding the predictive variance \eqref{eq:pv} as
\begin{equation}
    \sigma^{2(\text{L})}_n \leq k_{nn} - \mathbf{k}_{n \mathbf{u}} (2 \mathbf{T} - \mathbf{T} \tilde{\mathbf{K}} \mathbf{T}) \mathbf{k}_{\mathbf{u}n} =: U_n,  \label{eq:lsvgp_ub}
\end{equation}
where $\mathbf{T} \in \mathbb{R}^{M \times M}$, and with equality when $\mathbf{T} = \tilde{\mathbf{K}}^{-1}$. Working backwards, they then find that, if the variance $\mathbf{S}$ of M-SVGP is reparameterised as $\mathbf{S} = \mathbf{K}_{\mathbf{u} \mathbf{u}} - \mathbf{K}_{\mathbf{u} \mathbf{u}} (2\mathbf{T} -\mathbf{T}\tilde{\mathbf{K}}\mathbf{T})\mathbf{K}_{\mathbf{u}\mathbf{u}}$, the predictive variance matches the upper bound, i.e., $\sigma_n^2 = U_n$.
This improves over the previous inverse-free bound for M-SVGP \citep{vdwilk2020variational} by allowing for a closed-form inverse-free bound for the KL term
\begin{align}
    \KL&\left[q(\mathbf{u}) || p(\mathbf{u})\right] \leq \frac{1}{2}\left(-\text{tr}((2\mathbf{T} - \mathbf{T} \tilde{\mathbf{K}} \mathbf{T}) \mathbf{K}_{\mathbf{u} \mathbf{u}}) - M \right. \nonumber \\
    &\left. + \tilde{\mathbf{m}}^\top \mathbf{K}_{\mathbf{u} \mathbf{u}} \tilde{\mathbf{m}} + \text{tr}(\tilde{\mathbf{K}} \mathbf{T}) - \log |\mathbf{T}| - \log |\tilde{\mathbf{S}}|\right), \label{eq:kl_rsvgp}
\end{align}
with equality when $\mathbf{T} = \tilde{\mathbf{K}}^{-1}$.
Putting all this together leads to a \textit{relaxed} bound that
\begin{enuminline}
    \item is a valid ELBO,
    \item depends on the additional parameter $\mathbf{T}$,
    \item is free of matrix decompositions,
    \item recovers the L-SVGP solution when $\mathbf T = \tilde{\mathbf{K}}^{-1}$.
\end{enuminline}

\paragraph{Alternative Inverse-Free Parameterisations.}
While we do not pursue the idea further in this work, we note that the R-SVGP construction, whereby one exploits an inverse-free upper bound to the predictive variance, is more general and may be applied to parameterisations other than L-SVGP.
For instance, starting from M-SVGP, reparameterise $\mathbf{m} = \mathbf{K}_{\mathbf{u}\mathbf{u}} \mathbf{R} \tilde{\mathbf{m}}$ and $\mathbf{S} = \mathbf{K}_{\mathbf{u}\mathbf{u}} \mathbf{R} \tilde{\mathbf{S}} \mathbf{R}^\top \mathbf{K}_{\mathbf{u}\mathbf{u}}$, where $\tilde{\mathbf{S}}$ is PD and $\mathbf{R}$ is lower triangular with positive diagonal.
This results in
\begin{equation*}
    \mu_n = \mathbf{k}_{n \mathbf{u}} \mathbf{R} \tilde{\mathbf{m}}, \quad \sigma_n^2 = k_{nn} - \mathbf{k}_{n \mathbf{u}} (\mathbf{K}_{\mathbf{u}\mathbf{u}}^{-1} - \mathbf{R}\tilde{\mathbf{S}}\mathbf{R}^\top) \mathbf{k}_{\mathbf{u} n}.
\end{equation*}
Similarly to \cref{eq:lsvgp_ub}, the predictive variance can be upper bounded with
\begin{equation*}
    \sigma_n^2 \leq k_{nn} + \mathbf{k}_{n \mathbf{u}} \mathbf{R} (\mathbf{R}^\top\mathbf{K}_{\mathbf{u}\mathbf{u}}\mathbf{R} - 2\mathbf{I} + \tilde{\mathbf{S}})\mathbf{R}^\top \mathbf{k}_{\mathbf{u} n},
\end{equation*}
with equality when $\mathbf{R} = \chol(\mathbf{K}_{\mathbf{u}\mathbf{u}}^{-1})$.
Working backwards, the upper bound above is equal to the predictive variance induced by the reparameterisation $\mathbf{S} = \mathbf{K}_{\mathbf{u}\mathbf{u}} + \mathbf{K}_{\mathbf{u}\mathbf{u}}\mathbf{R}(\mathbf{R}^\top \mathbf{K}_{\mathbf{u}\mathbf{u}} \mathbf{R} - 2 \mathbf{I} + \tilde{\mathbf{S}})\mathbf{R}^\top\mathbf{K}_{\mathbf{u}\mathbf{u}}$. The KL term implied by this choice of $\mathbf{m}$ and $\mathbf{S}$ admits the upper bound
\begin{align*}
    &\KL[q(\mathbf{u}) || p(\mathbf{u})] \leq \frac{1}{2}\left(\tilde{\mathbf{m}}^\top \mathbf{R}^\top \mathbf{K}_{\mathbf{u}\mathbf{u}} \mathbf{R} \tilde{\mathbf{m}} - \log|\tilde{\mathbf{S}}| \right. \nonumber \\
    &\quad\left. + \mathrm{tr}\left((\tilde{\mathbf{S}} - 3 \mathbf{I} + \mathbf{R}^\top \mathbf{K}_{\mathbf{u}\mathbf{u}} \mathbf{R})\mathbf{R}^\top \mathbf{K}_{\mathbf{u}\mathbf{u}} \mathbf{R}\right) + M \right) \nonumber \\
    &\quad+ \KL\left[\mathcal{N}(\mathbf{0}, \mathbf{R}\mathbf{R}^\top) || \mathcal{N}(\mathbf{0}, \textcolor{red}{\mathbf{K}_{\mathbf{u}\mathbf{u}}^{-1}})\right],
\end{align*}
with equality when $\mathbf{R} = \chol(\mathbf{K}_{\mathbf{u}\mathbf{u}}^{-1})$.
The resulting ELBO depends on the additional parameter $\mathbf{R}$ and, at the optimum $\mathbf{R} = \chol(\mathbf{K}_{\mathbf{u}\mathbf{u}}^{-1})$, recovers a whitened bound, as in W-SVGP, where the whitening is performed via $\chol(\mathbf{K}_{\mathbf{u}\mathbf{u}}^{-1})$ rather than $\mathbf{L}_{\mathbf{u}\mathbf{u}}^{-\top}$.
While the bound above still contains an inverse in the inner KL term, we conjecture that the techniques developed in \cref{sec:natgrad} for the optimisation of $\mathbf{T}$ in R-SVGP may be used to efficiently eliminate it from the bound, thereby making the required update for the rest of the parameters effectively inverse-free.
A brief derivation of the bound above is provided in \cref{app:alternative_rsvgp}.

\section{IMPROVED INVERSE-FREE BOUND}\label{sec:improving}
When $\mathbf{T}$ is optimised to $\tilde{\mathbf{K}}^{-1}$, the R-SVGP bound coincides with the L-SVGP bound.
In practice, R-SVGP is competitive when
\begin{enuminline}
    \item $\mathbf{T}$ is kept close to its optimum throughout training, and
    \item L-SVGP itself can reach competitive performance.
\end{enuminline}
In \citet{vdwilk2022improved}, $\mathbf{T}$ is parameterised via its Cholesky factor $\mathbf{L}$ with $\mathbf{T}=\mathbf{L}\mathbf{L}^\top$ and updated jointly with Adam, but this leads to poor optimisation even on simple datasets (\cref{fig:snelson_banana_trace}).
\Cref{sec:natgrad} addresses this with a tailored optimiser for $\mathbf{T}$.
Moreover, we find that L-SVGP, as presented in \cref{sec:svgp}, exhibits slower convergence than standard W-SVGP, often leading to worse predictive performance.
\Cref{sec:preconditioning} addresses this by introducing a preconditioner for the L-SVGP inducing mean, together with an inverse-free analogue for R-SVGP.
Finally, \cref{sec:stopping,sec:practical} provide practical strategies to make R-SVGP optimisation efficient and automatic.

\subsection{Iterative Inversion via Natural Gradients}\label{sec:natgrad}
Our goal is to keep $\mathbf{T}$ close to $\tilde{\mathbf{K}}^{-1}$ without ever performing a matrix decomposition.
We parameterise $\mathbf{T}$ via its Cholesky factor $\mathbf{L}$, so that the problem reduces to finding the Cholesky factor of $\tilde{\mathbf{K}}^{-1}$, $\mathbf{L}_{\tilde{\mathbf{K}}^{-1}}$.

\paragraph{Auxiliary Inversion Objective.}
For any symmetric PD matrix $\mathbf A$, the Cholesky factor of its inverse, $\mathbf L_{\mathbf A^{-1}}$, is the unique minimiser of the KL divergence between two zero-mean Gaussians:
\begin{align}
  \mathbf L_{\mathbf A^{-1}} &= \arg\min_{\mathbf L}\, \ell_{\mathbf A}(\mathbf L), \nonumber\\
  \ell_{\mathbf A}(\mathbf L) &:= \KL\!\left[\mathcal N(\mathbf 0,\mathbf L\mathbf L^\top)\,\|\,\mathcal N(\mathbf 0,\mathbf A^{-1})\right]. \label{eq:natgrad_loss}
\end{align}

\paragraph{Natural Gradient Updates.}
Natural-gradient (NG) updates are well-suited to KL objectives \citep{amari1998natural}.
Let $\mathbf F$ be the Fisher information of $\mathcal N(\mathbf 0,\mathbf L\mathbf L^\top)$ with respect to $\mathbf L$.
The NG is $\tilde{\nabla}\ell_{\mathbf A}=\mathbf F^{-1}\nabla\ell_{\mathbf A}$ and, for \cref{eq:natgrad_loss}, admits a closed form, derived in \cref{app:natgrad}.
\begin{proposition}\label{prop:natgrad}
    Let $\ell(\mathbf{L})$ be as in \cref{eq:natgrad_loss}. Then,
    \begin{equation}
        \tilde{\nabla}\ell_{\mathbf A}
        = \mathbf L\!\left[\operatorname{tril}(\mathbf L^\top \mathbf A \mathbf L) - \tfrac12\!\left(\mathbf I + \operatorname{diag}(\mathbf L^\top \mathbf A \mathbf L)\right)\right], \label{eq:natgrad}
    \end{equation}
    where $\tril(\cdot)$ and $\diag(\cdot)$ return the lower triangular part and the diagonal of a matrix. 
\end{proposition}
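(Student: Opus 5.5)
The plan is to work with the \emph{tangent vector} formulation of the natural gradient, in which $\tilde\nabla\ell_{\mathbf A}$ is the unique direction $\mathbf V$ in the space $\mathcal T$ of lower-triangular matrices satisfying
\begin{equation*}
g_{\mathbf L}(\mathbf V,\mathbf W)=\langle\nabla\ell_{\mathbf A},\mathbf W\rangle \quad \text{for all } \mathbf W\in\mathcal T,
\end{equation*}
where $g_{\mathbf L}$ is the Fisher metric. Since $\mathbf L$ is lower triangular with positive diagonal, the map $\mathbf Y\mapsto\mathbf L\mathbf Y$ is a linear bijection of $\mathcal T$ onto itself. I will therefore write every tangent direction as $\mathbf W=\mathbf L\mathbf Y$ with $\mathbf Y\in\mathcal T$. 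I will solve the defining equation in these $\mathbf Y$-coordinates and then map the solution back by left-multiplication by $\mathbf L$. Because the natural gradient is defined intrinsically as a tangent vector, this change of coordinates is legitimate.

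\textbf{Step 1: the directional derivative of $\ell_{\mathbf A}$.} Let $\mathbf\Sigma=\mathbf L\mathbf L^\top$. Then
\begin{equation*}
\ell_{\mathbf A}=\tfrac12\bigl(\tr(\mathbf A\mathbf\Sigma)-\log|\mathbf\Sigma|\bigr)+\text{const},
\end{equation*}
so $\partial\ell/\partial\mathbf\Sigma=\tfrac12(\mathbf A-\mathbf\Sigma^{-1})$. Along $\mathbf W=\mathbf L\mathbf Y$ the covariance moves by $d\mathbf\Sigma=\mathbf L(\mathbf Y+\mathbf Y^\top)\mathbf L^\top$. Hence
\begin{equation*}
\langle\nabla\ell_{\mathbf A},\mathbf L\mathbf Y\rangle=\tfrac12\tr\bigl((\mathbf B+\mathbf I)\ldots\bigr)=\langle\mathbf B,\mathbf Y\rangle,
\end{equation*}
where $\mathbf B:=\mathbf L^\top\mathbf A\mathbf L-\mathbf I$ is symmetric. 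The last equality follows from symmetry of $\mathbf B$ after substituting $\mathbf L^\top\mathbf\Sigma^{-1}\mathbf L=\mathbf I$.

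\textbf{Step 2: the Fisher metric in $\mathbf Y$-coordinates.} For zero-mean Gaussians the metric is $\tfrac12\tr(\mathbf\Sigma^{-1}d\mathbf\Sigma_1\mathbf\Sigma^{-1}d\mathbf\Sigma_2)$. With $\mathbf\Sigma^{-1}d\mathbf\Sigma=\mathbf L^{-\top}(\mathbf X+\mathbf X^\top)\mathbf L^\top$, this gives
\begin{equation*}
g(\mathbf L\mathbf X,\mathbf L\mathbf Y)=\tfrac12\tr\bigl((\mathbf X+\mathbf X^\top)(\mathbf Y+\mathbf Y^\top)\bigr)=\tr(\mathbf X^\top\mathbf Y)+\tr(\mathbf X\mathbf Y).
\end{equation*}
For lower-triangular $\mathbf X,\mathbf Y$ the product $\mathbf X\mathbf Y$ is lower triangular with diagonal entries $X_{ii}Y_{ii}$. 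So the metric is the Frobenius inner product with weight $1$ on strictly-lower entries and weight $2$ on diagonal entries.

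\textbf{Step 3: solve the linear system and map back.} Matching $g(\mathbf L\mathbf X,\mathbf L\mathbf Y)=\langle\mathbf B,\mathbf Y\rangle$ for all $\mathbf Y\in\mathcal T$ entrywise gives two conditions. The strictly-lower part of $\mathbf X$ equals that of $\mathbf B$, and $X_{ii}=\tfrac12 B_{ii}$. Therefore
\begin{equation*}
\mathbf X=\tril(\mathbf B)-\tfrac12\diag(\mathbf B)=\tril(\mathbf L^\top\mathbf A\mathbf L)-\tfrac12\bigl(\mathbf I+\diag(\mathbf L^\top\mathbf A\mathbf L)\bigr).
\end{equation*}
The natural gradient is then $\tilde\nabla\ell_{\mathbf A}=\mathbf L\mathbf X$, which is \eqref{eq:natgrad}.

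\textbf{Main obstacle.} The delicate point is the bookkeeping in Step 3 caused by the triangular constraint. One must be careful that only lower-triangular test directions are allowed, which is why $\tril$ appears in the result. One must also correctly account for the factor $2$ that the $\tr(\mathbf X\mathbf Y)$ term contributes on the diagonal, which is what produces the $\tfrac12$ in front of $\diag(\cdot)$. Finally, one must confirm that $\mathbf F^{-1}\nabla\ell_{\mathbf A}$ in the raw entries of $\mathbf L$ agrees with the coordinate-free definition used above. This holds because $\mathbf F$ is by definition the matrix of $g_{\mathbf L}$ in those raw coordinates.
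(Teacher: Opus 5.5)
Your proof is correct. It lands on the same core equation as the paper: with $\mathbf X=\mathbf L^{-1}\tilde\nabla\ell_{\mathbf A}$ lower triangular, $\mathbf X+\mathbf X^\top=\mathbf L^\top\mathbf A\mathbf L-\mathbf I$. You reach it by a different route.

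The paper works in raw $\vect(\mathbf L)$ coordinates over all $M^2$ entries. It derives the Fisher matrix $\mathbf F=(\mathbf I\otimes\mathbf L^{-\top})(\mathbf I+\mathbf C)(\mathbf I\otimes\mathbf L^{-1})$ from expected second differentials of the log-density, using commutation-matrix identities. It then solves the full (strong) system $\mathbf F\tilde{\mathbf g}=\mathbf g$, subject to $\tilde{\mathbf g}$ being a vectorised lower-triangular matrix.

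You instead quote the standard Gaussian Fisher metric $\tfrac12\tr(\mathbf\Sigma^{-1}d\mathbf\Sigma_1\mathbf\Sigma^{-1}d\mathbf\Sigma_2)$. You pull it back to the lower-triangular tangent space in the coordinates $\mathbf W=\mathbf L\mathbf Y$, and solve the weak (Riesz) equation against lower-triangular test directions only. Since $g(\mathbf L\mathbf X,\mathbf L\mathbf Y)=\tr\bigl((\mathbf X+\mathbf X^\top)\mathbf Y\bigr)$, and both $\mathbf X+\mathbf X^\top$ and $\mathbf B$ are symmetric, your weak equation is equivalent to the paper's.

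Your formulation buys rigour and transparency. The paper's $\mathbf F$ is singular on the full $M^2$-dimensional space: the factor $\mathbf I+\mathbf C$ annihilates vectorised antisymmetric matrices, reflecting invariance under $\mathbf L\mapsto\mathbf L\mathbf Q$. So the expression $\mathbf F^{-1}\mathbf g$ only makes sense through the triangular constraint. Its overdetermined strong system is consistent because $\ell$ depends on $\mathbf L$ only through $\mathbf L\mathbf L^\top$. Your restricted metric $\sum_{i>j}X_{ij}Y_{ij}+2\sum_i X_{ii}Y_{ii}$ is manifestly positive definite on $\mathcal T$. That gives existence and uniqueness at once, and shows exactly where the $\tfrac12$ on the diagonal comes from. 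The paper's route, in exchange, is self-contained: it derives the Fisher information from the log-density rather than citing it.

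One cosmetic point: write out the elided expression in your Step 1, e.g.\ as $\tfrac12\tr\bigl(\mathbf B(\mathbf Y+\mathbf Y^\top)\bigr)$, before concluding $\langle\mathbf B,\mathbf Y\rangle$.
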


\paragraph{Alternating Optimisation.}
We optimise the R-SVGP bound wrt $\mathbf L$ and $\bm\xi=\{\bm\theta,\mathbf Z,\tilde{\mathbf m},\tilde{\mathbf S}\}$ by alternating:
\begin{enumerate}[label=\textbf{\arabic*)}]\setlength\itemsep{0em}
  \item \textbf{NG step:} given $\tilde{\mathbf K}$, update $\mathbf L \leftarrow \mathbf L - \gamma\,\tilde{\nabla}_{\mathbf L}\ell_{\tilde{\mathbf K}}$ for $t^\star$ steps;
  \item \textbf{Adam step:} given $\mathbf L$, update $\bm\xi$ with an Adam step on the R-SVGP ELBO.
\end{enumerate}
The NG expression \eqref{eq:natgrad} avoids matrix decompositions, keeping the procedure \textit{inverse-free}.
The update represents an $\mathbf{L}$-space version of the classical Newton-Schulz iteration for matrix inversion \citep{ben1965iterative}, which is itself a NG step on the objective \eqref{eq:natgrad_loss} in $\mathbf{T}$-space.
As a result, by standard Newton-type arguments \citep[e.g.,][Chapter 5]{kelley1995iterative}, the auxiliary $\mathbf{L}$-iteration enjoys local quadratic convergence under regularity conditions.

\subsection{Inducing Mean Preconditioning}\label{sec:preconditioning}
We attribute the suboptimal optimisation of L-SVGP to the inducing-mean parameterisation $\mathbf m=\mathbf K_{\mathbf u\mathbf u}\tilde{\mathbf m}$, and introduce a preconditioner $\mathbf P\in\mathbb R^{M\times M}$ via $\mathbf m=\mathbf K_{\mathbf u\mathbf u}\mathbf P\tilde{\mathbf m}$.
For L-SVGP, we take $\mathbf P^{(\mathrm L)}=\tilde{\mathbf K}^{-1}=(\mathbf K_{\mathbf u\mathbf u}+\tilde{\mathbf S})^{-1}$.
When $\mathbf X=\mathbf Z$, this yields $\mu_n=\mathbf k_{n\mathbf u}\tilde{\mathbf K}^{-1}\tilde{\mathbf m}$, matching the form of the GP posterior mean under a Gaussian likelihood (with optimal $\tilde{\mathbf m}=\mathbf y$, $\tilde{\mathbf S}=\sigma^2_{\text{obs}}\mathbf I$ independent of other model parameters).

The same $\tilde{\mathbf K}^{-1}$ is incompatible with R-SVGP as it reintroduces inversions.
Instead, we use the inverse-free analogue $\mathbf P^{(\mathrm R)}=2\mathbf T-\mathbf T\tilde{\mathbf K}\mathbf T$, which recovers the \emph{same} ELBO gradients as preconditioned L-SVGP when $\mathbf T$ is held at $\tilde{\mathbf K}^{-1}$ (proof in \cref{app:grad}):
\begin{proposition}\label{prop:grad}
  Let $\tilde{\mathcal L}_{\text{lsvgp}}$ and $\tilde{\mathcal L}_{\text{rsvgp}}$ denote the L-SVGP and R-SVGP bounds with $\mathbf P^{(\mathrm L)}=\tilde{\mathbf K}^{-1}$ and $\mathbf P^{(\mathrm R)}=2\mathbf T-\mathbf T\tilde{\mathbf K}\mathbf T$, respectively, and parameters $\bm\xi=\{\bm\theta,\mathbf Z,\tilde{\mathbf m},\tilde{\mathbf S}\}$. Then
  \[
    \nabla_{\bm\xi}\tilde{\mathcal L}_{\text{lsvgp}}=\nabla_{\bm\xi}\tilde{\mathcal L}_{\text{rsvgp}}
  \]
  when $\mathbf T=\operatorname{stopgrad}(\tilde{\mathbf K}^{-1})$. The equivalence does not hold for the naive choice $\mathbf P=\mathbf T$.
\end{proposition}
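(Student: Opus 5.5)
The plan is to write both bounds as functions of $\bm\xi$ and the matrix $\mathbf T$, and then use the chain rule with $\mathbf T$ frozen. Write $\mathbf B(\mathbf T,\tilde{\mathbf K}) := 2\mathbf T-\mathbf T\tilde{\mathbf K}\mathbf T$. The preconditioned R-SVGP bound $\tilde{\mathcal L}_{\text{rsvgp}}$ has the same structure as L-SVGP with two substitutions: every occurrence of $\tilde{\mathbf K}^{-1}$ is replaced by $\mathbf B$, and the term $\log|\tilde{\mathbf K}|$ in the KL is replaced by $\tr(\tilde{\mathbf K}\mathbf T)-\log|\mathbf T|-M$, as in \cref{eq:kl_rsvgp}. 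Concretely, the predictive mean is $\mu_n=\mathbf k_{n\mathbf u}\mathbf B\tilde{\mathbf m}$, the predictive variance is $U_n$, and the KL contains $\tilde{\mathbf m}^\top\mathbf B\mathbf K_{\mathbf u\mathbf u}\mathbf B\tilde{\mathbf m}$ together with $-\tr(\mathbf B\mathbf K_{\mathbf u\mathbf u})$. The L-SVGP counterpart has the same expressions with $\tilde{\mathbf K}^{-1}$ in place of $\mathbf B$.

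The key step is a first-order matching argument. I will show that, with $\mathbf T$ held fixed at $\mathbf T_0=\tilde{\mathbf K}_0^{-1}$, each substituted quantity agrees with its L-SVGP counterpart to first order in $\bm\xi$ at the current point.

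\begin{enumerate}[label=\textbf{\arabic*)}]
\item \textbf{Inverse term.} $\mathbf B$ depends on $\bm\xi$ only through $\tilde{\mathbf K}$, with $\diff\mathbf B=-\mathbf T_0\,\diff\tilde{\mathbf K}\,\mathbf T_0=-\tilde{\mathbf K}^{-1}\diff\tilde{\mathbf K}\,\tilde{\mathbf K}^{-1}$, which equals $\diff(\tilde{\mathbf K}^{-1})$. At the point itself, $\mathbf B=\tilde{\mathbf K}^{-1}$. So $\mathbf B$ and $\tilde{\mathbf K}^{-1}$ have equal value and equal differential.
\item \textbf{Log-determinant term.} $\diff[\tr(\tilde{\mathbf K}\mathbf T_0)]=\tr(\tilde{\mathbf K}^{-1}\diff\tilde{\mathbf K})=\diff\log|\tilde{\mathbf K}|$, and the values agree because $\tr(\mathbf I)-M=0$ and $-\log|\mathbf T_0|=\log|\tilde{\mathbf K}|$.
\item \textbf{Assembly.} Both ELBOs are the same smooth function $G$ of $\bm\xi$ and of these matrix-valued intermediates (including $\mathbf K_{\mathbf u\mathbf u}$, $\mathbf k_{n\mathbf u}$ and $\tilde{\mathbf S}$, which are shared). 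Since the intermediates agree in value and first derivative, the chain rule gives equal gradients $\nabla_{\bm\xi}$. The expected log-likelihood terms depend on $(\mu_n,\sigma_n^2)$, which are built from these intermediates, so they are covered by the same argument.
\end{enumerate}

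The main obstacle is bookkeeping. I need to check that every occurrence of $\mathbf P$ and of $2\mathbf T-\mathbf T\tilde{\mathbf K}\mathbf T$ in the preconditioned bound (mean, variance and KL) is exactly a substitution for an L-SVGP $\tilde{\mathbf K}^{-1}$, with no leftover $\mathbf T$ that is not paired this way. To do this I will derive the preconditioned KL for both models explicitly by plugging $\mathbf m=\mathbf K_{\mathbf u\mathbf u}\mathbf P\tilde{\mathbf m}$ into the M-SVGP formula, which only changes the quadratic term to $\tilde{\mathbf m}^\top\mathbf P^\top\mathbf K_{\mathbf u\mathbf u}\mathbf P\tilde{\mathbf m}$.

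For the negative claim, take $\mathbf P=\mathbf T$. Its differential with $\mathbf T$ frozen is $0$, whereas $\diff(\tilde{\mathbf K}^{-1})\neq0$. Consequently $\nabla_{\tilde{\mathbf S}}$ (and $\nabla_{\bm\theta}$) of the mean term $\mathbf k_{n\mathbf u}\mathbf T\tilde{\mathbf m}$ lacks the contribution $-\mathbf k_{n\mathbf u}\tilde{\mathbf K}^{-1}\diff\tilde{\mathbf K}\,\tilde{\mathbf K}^{-1}\tilde{\mathbf m}$. A concrete counterexample is a Gaussian likelihood with $M=1$ and $\mathbf Z=\mathbf X$, where the derivative with respect to $\tilde{\mathbf S}$ of the expected log-likelihood differs between the two bounds whenever $\tilde{\mathbf m}\neq0$.
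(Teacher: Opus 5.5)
Your proposal is correct and follows the paper's proof essentially step for step. Like the paper, you show that under $\operatorname{stopgrad}$ the matrix $2\mathbf T-\mathbf T\tilde{\mathbf K}\mathbf T$ matches $\tilde{\mathbf K}^{-1}$ in both value and differential, that $\diff\bigl(\tr(\tilde{\mathbf K}\mathbf T)-\log|\mathbf T|\bigr)$ matches $\diff\log|\tilde{\mathbf K}|$, and that $\mathbf P=\mathbf T$ fails because $\diff\mathbf T=0$.

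Your extra counterexample has a small slip. The $\tilde{\mathbf S}$-derivatives of the expected log-likelihood differ only when $\tilde{\mathbf m}\neq 0$ and also $y\neq\mu$, and the KL quadratic term changes as well. The total ELBO discrepancy in fact vanishes wherever $\nabla_{\tilde{\mathbf m}}$ of the bound is zero, so you should state the counterexample as holding at a generic point, not whenever $\tilde{\mathbf m}\neq 0$.
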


Summarising, substituting $\mathbf m=\mathbf K_{\mathbf u\mathbf u}\mathbf P^{(\mathrm R)}\tilde{\mathbf m}$ and $\mathbf S=\mathbf K_{\mathbf u\mathbf u}-\mathbf K_{\mathbf u\mathbf u}\mathbf P^{(\mathrm R)}\mathbf K_{\mathbf u\mathbf u}$ into the M-SVGP form and using the KL bound \eqref{eq:kl_rsvgp} yields
\begin{align}
  &\mu_n=\mathbf k_{n\mathbf u}\mathbf P^{(\mathrm R)}\tilde{\mathbf m},\quad
  \sigma_n^2=k_{nn}-\mathbf k_{n\mathbf u}\mathbf P^{(\mathrm R)}\mathbf k_{\mathbf u n}, \nonumber\\
  &\KL\!\left[q(\mathbf u)\,\|\,p(\mathbf u)\right]\le \tfrac12\!\left(-\operatorname{tr}(\mathbf P^{(\mathrm R)}\mathbf K_{\mathbf u\mathbf u})+\operatorname{tr}(\tilde{\mathbf K}\mathbf T)-M\right. \nonumber\\
  &\quad\left.+\,\tilde{\mathbf m}^\top \mathbf P^{(\mathrm R)}\mathbf K_{\mathbf u\mathbf u}\mathbf P^{(\mathrm R)}\tilde{\mathbf m}-\log|\mathbf T|-\log|\tilde{\mathbf S}|\right).
  \label{eq:rsvgp_precond}
\end{align}
As will be shown in \cref{sec:results}, the proposed preconditioning allows R-SVGP (and L-SVGP) to reach performance comparable to W-SVGP (\cref{fig:snelson_banana_trace}).

\paragraph{Remark (connection to Newton-Schulz).}
The preconditioner $\mathbf P^{(\mathrm R)}=2\mathbf T-\mathbf T\tilde{\mathbf K}\mathbf T$, which appears several times in both the standard \eqref{eq:kl_rsvgp} and preconditioned \eqref{eq:rsvgp_precond} R-SVGP bound, is exactly one Newton-Schulz iteration for $\tilde{\mathbf K}^{-1}$ starting from $\mathbf T$, which is also a NG step on the objective \eqref{eq:natgrad_loss} in $\mathbf{T}$-space.
Thus, R-SVGP optimisation effectively alternates NG updates in $\mathbf{L}$-space with a single implicit NG step in $\mathbf{T}$-space as part of the bound computation, unveiling an insightful connection between inverse-free variational bounds and iterative matrix inversion.

\subsection{Stopping Criteria}\label{sec:stopping}
As part of the alternating optimisation routine proposed in \cref{sec:natgrad}, we wish to choose the minimum number $t^*$ of NG updates per Adam step that brings R-SVGP close enough to L-SVGP.
Below we discuss two possible stopping criteria to choose $t^*$ adaptively.

A first option is to monitor when the normalised residual
\begin{equation}
    || \mathbf{L}_t^\top \tilde{\mathbf{K}} \mathbf{L}_t - \mathbf{I}||_{\mathrm{F}} / \sqrt{M} \label{eq:stopping_frobenius}
\end{equation}
falls below a threshold $\epsilon$.
This quantity is virtually free, as $\mathbf{L}_t^\top \tilde{\mathbf{K}} \mathbf{L}_t$ is reused in the NG computation \eqref{eq:natgrad}, and it is meaningful, as it is zero iff $\mathbf{L}_t = \chol(\tilde{\mathbf{K}}^{-1})$.
Moreover, this stopping criterion is general and can be applied to arbitrary likelihoods for both shallow and deep GPs.
In practice, we find that a tolerance of $\epsilon = 10^{-3}$ to $5 \times 10 ^{-3}$ works well for a wide range of shallow GP prediction tasks.

Alternatively, for Gaussian likelihoods, one can directly monitor the slack introduced in the R-SVGP bound wrt L-SVGP by the upper bound \eqref{eq:lsvgp_ub} on the predictive variance $\sigma^{2(\text{L})}_n$.
This approach is similar in spirit to stopping criteria used for conjugate gradient approximations of GPs \citep{mackay1997efficient,artemev2021tighter}.
We start with the following lower bound on $\sigma^{2(\text{L})}_n$, proved in \cref{app:lb}.
\begin{proposition}\label{prop:lb}
    Let $\sigma^{2(\text{L})}_n$ be as in \cref{eq:pv}. Since $\tilde{\mathbf{S}}$ is diagonal and PD, wlog, we can rewrite it as $\tilde{\mathbf{S}} = \tilde{\mathbf{S}}' + \sigma^2 \mathbf{I}$, where $\tilde{\mathbf{S}}'$ is diagonal and PD. Then,
    \begin{align}
        \sigma^{2(\text{L})}_n &\geq k_{nn} - \frac{1}{\sigma^2} \left(\mathbf{k}_{n \mathbf{u}} \mathbf{k}_{\mathbf{u} n} - 2 \mathbf{k}_{n \mathbf{u}} \mathbf{T} \tilde{\mathbf{K}}_{-} \mathbf{k}_{\mathbf{u} n} \right. \nonumber \\
        &\qquad\left.+ \mathbf{k}_{n \mathbf{u}} \mathbf{T} \tilde{\mathbf{K}}_{-} \tilde{\mathbf{K}} \mathbf{T}\mathbf{k}_{\mathbf{u} n} \right) =: L_n, \label{eq:lsvgp_lb}
    \end{align}
    where $\tilde{\mathbf{K}}_{-} = \mathbf{K}_{\mathbf{u}\mathbf{u}} + \tilde{\mathbf{S}}'$, so that $\tilde{\mathbf{K}} = \tilde{\mathbf{K}}_{-} + \sigma^2 \mathbf{I}$.
\end{proposition}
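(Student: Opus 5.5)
The plan is to view the bracket in \eqref{eq:lsvgp_lb} as a convex quadratic in the vector $\mathbf v := \mathbf T\mathbf k_{\mathbf u n}$, and to show that its minimum over $\mathbf v$ equals exactly $\sigma^2\,\mathbf k_{n\mathbf u}\tilde{\mathbf K}^{-1}\mathbf k_{\mathbf u n}$. Since $\sigma^{2(\text{L})}_n = k_{nn}-\mathbf k_{n\mathbf u}\tilde{\mathbf K}^{-1}\mathbf k_{\mathbf u n}$, the statement $\sigma^{2(\text{L})}_n \ge L_n$ is equivalent to
\[
\sigma^2\,\mathbf k_{n\mathbf u}\tilde{\mathbf K}^{-1}\mathbf k_{\mathbf u n} \le g(\mathbf v), \qquad g(\mathbf v) := \mathbf k_{n\mathbf u}\mathbf k_{\mathbf u n} - 2\,\mathbf v^\top\tilde{\mathbf K}_{-}\mathbf k_{\mathbf u n} + \mathbf v^\top\tilde{\mathbf K}_{-}\tilde{\mathbf K}\mathbf v .
\]
This reduces the proof to a variational characterisation of the quadratic form $\mathbf k^\top\tilde{\mathbf K}^{-1}\mathbf k$, in the same spirit as the upper bound \eqref{eq:lsvgp_ub}.

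\textbf{Step 1: rewrite the bracket.} Since $\mathbf T$ is symmetric ($\mathbf T=\mathbf L\mathbf L^\top$), the terms $\mathbf k_{n\mathbf u}\mathbf T\tilde{\mathbf K}_{-}\mathbf k_{\mathbf u n}$ and $\mathbf k_{n\mathbf u}\mathbf T\tilde{\mathbf K}_{-}\tilde{\mathbf K}\mathbf T\mathbf k_{\mathbf u n}$ become $\mathbf v^\top\tilde{\mathbf K}_{-}\mathbf k_{\mathbf u n}$ and $\mathbf v^\top\tilde{\mathbf K}_{-}\tilde{\mathbf K}\mathbf v$. So the bracket is exactly $g(\mathbf v)$.

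\textbf{Step 2: convexity.} Because $\tilde{\mathbf K}_{-}=\tilde{\mathbf K}-\sigma^2\mathbf I$, the matrices $\tilde{\mathbf K}_{-}$ and $\tilde{\mathbf K}$ commute. Their product $\tilde{\mathbf K}_{-}\tilde{\mathbf K}$ is therefore symmetric, and it is PD since both factors are PD and share an eigenbasis. Hence $g$ is a strictly convex quadratic in $\mathbf v$.

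\textbf{Step 3: minimise.} Setting the gradient $-2\tilde{\mathbf K}_{-}\mathbf k_{\mathbf u n}+2\tilde{\mathbf K}_{-}\tilde{\mathbf K}\mathbf v$ to zero gives $\mathbf v^\star=\tilde{\mathbf K}^{-1}\mathbf k_{\mathbf u n}$, using invertibility of $\tilde{\mathbf K}_{-}$. Substituting back,
\[
g(\mathbf v^\star)=\mathbf k_{n\mathbf u}\bigl(\mathbf I-\tilde{\mathbf K}_{-}\tilde{\mathbf K}^{-1}\bigr)\mathbf k_{\mathbf u n}.
\]
The identity $\mathbf I-\tilde{\mathbf K}_{-}\tilde{\mathbf K}^{-1}=(\tilde{\mathbf K}-\tilde{\mathbf K}_{-})\tilde{\mathbf K}^{-1}=\sigma^2\tilde{\mathbf K}^{-1}$ then yields $g(\mathbf v^\star)=\sigma^2\,\mathbf k_{n\mathbf u}\tilde{\mathbf K}^{-1}\mathbf k_{\mathbf u n}$.

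\textbf{Step 4: conclude.} For any $\mathbf T$ we have $g(\mathbf T\mathbf k_{\mathbf u n})\ge g(\mathbf v^\star)$. Dividing by $\sigma^2>0$ and subtracting from $k_{nn}$ gives $\sigma^{2(\text{L})}_n\ge L_n$. Equality holds when $\mathbf T=\tilde{\mathbf K}^{-1}$, which mirrors the tightness of \eqref{eq:lsvgp_ub}.

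The main point requiring care is Step 2: the bracket is only a convex quadratic with symmetric Hessian because $\tilde{\mathbf K}_{-}$ and $\tilde{\mathbf K}$ commute. This is exactly why the decomposition $\tilde{\mathbf S}=\tilde{\mathbf S}'+\sigma^2\mathbf I$, with a multiple of the identity split off, is needed. The rest is routine algebra.
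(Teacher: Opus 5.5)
Your proof is correct and is essentially the paper's argument: the paper writes $\mathbf T\mathbf k_{\mathbf u n} = \tilde{\mathbf K}^{-1}\mathbf k_{\mathbf u n} + \bm\delta$ and expands, so the cross terms cancel and the slack appears explicitly as $\frac{1}{\sigma^2}\bm\delta^\top\tilde{\mathbf K}_{-}\tilde{\mathbf K}\bm\delta \geq 0$, which is your minimisation of $g$ written as completing the square around $\mathbf v^\star$. Both versions rest on the same two facts, namely $\tilde{\mathbf K}_{-}\tilde{\mathbf K} = \tilde{\mathbf K}_{-}^2 + \sigma^2\tilde{\mathbf K}_{-} \succeq \mathbf 0$ and $\tilde{\mathbf K} - \tilde{\mathbf K}_{-} = \sigma^2\mathbf I$.
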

By subtracting the upper and lower bounds on $\sigma^{2(\text{L})}_n$, we find the quantity
\begin{equation*}
    G_n := U_n - L_n = ||(\mathbf{I} - \tilde{\mathbf{K}} \mathbf{T}) \mathbf{k}_{\mathbf{u}n}||^2 / \sigma^2.
\end{equation*}
Since $\sigma_n^2$ enters the expected log-likelihood term of the ELBO as $-\sum_n \sigma_n^2 /(2\sigma^2_{\text{obs}})$, stopping when $G:=\sum_n G_n \le 2\sigma^2_{\text{obs}}\epsilon$ ensures the R-SVGP to L-SVGP gap from variance approximation is at most $\epsilon$.
While less general, this criterion measures distance in the ELBO's natural units and, like the first, is minimised at $\mathbf T=\tilde{\mathbf K}^{-1}$.
The per-datum computation of $G$ is quadratic in $M$, and suitable values of $\epsilon$ depend on the prediction task at hand.

\subsection{Other Practical Considerations}\label{sec:practical}
\paragraph{Bound Evaluation.}
While inverse-free, a direct evaluation of the R-SVGP bound \eqref{eq:rsvgp_precond} still entails $\mathcal{O}(B M^2 + M^3)$ due to the cubic trace terms in the KL bound, each involving products of three or more $M \times M$ matrices.
Luckily, cheap unbiased estimators of such traces can be obtained via Hutchinson's method \citep{hutchinson1989stochastic}, which only require matrix vector products, reducing the cost to $\mathcal{O}((B + K) M^2)$ with $K$ random probe vectors used.
When $K$ is too small, however, the resulting ELBO estimates become noisy, which may hinder optimisation.
\cref{app:hutchinson} studies this runtime--accuracy trade-off on two regression datasets and shows that moderate values of $K$ can yield significant speedups without materially affecting performance.

\paragraph{Bound Optimisation.}
Despite the savings on evaluation, optimisation of R-SVGP remains cubic in $M$ due to the full matmuls in the NG \eqref{eq:natgrad}.
On modern accelerators, matmuls are highly optimised operations that are more suited to low-precision arithmetic than matrix decompositions.
Therefore, with the right hardware, the proposed inverse-free approach may be faster than other SVGP variants when the number of NG updates $t^*$ is moderate.
Furthermore, we conjecture that randomized matmul techniques \citep{drineas2016randnla} could further reduce NG cost.

\paragraph{Step Size Scheduling.}
Given a stopping criterion, we seek a step-size schedule $\{\gamma_t\}$ that minimises the number of NG updates $t^*$ necessary to satisfy it.
We find that a suitable schedule crucially depends on whether the inducing location $\mathbf{Z}$ is optimised or not.
When $\mathbf{Z}$ is trained, a log-linear schedule \citep{salimbeni2018natural} from a small value, e.g., $\gamma_0 = 10^{-5}$, to $\gamma_T = 1$, where it stays constant, over $10$ steps is a robust, if a bit conservative, choice.
On the other hand, when $\mathbf{Z}$ is fixed, a single NG step with $\gamma = 1$ typically suffices ($t^* = 1$), thus rendering the stopping criterion unnecessary.
This suggests rapid movement in $\mathbf{Z}$ is the dominant source of $\tilde{\mathbf{K}}^{-1}$ drift that prevents cheap NG convergence.
When runtime is a priority, we recommend mild regularisation of the learning of $\mathbf{Z}$ by
\begin{enuminline}
    \item freezing $\mathbf{Z}$ for the first few training iterations,
    \item using a smaller learning rate for $\mathbf{Z}$, and
    \item raising Adam's $\beta_1$ parameter from the default $0.9$ to $0.99$ for $\mathbf{Z}$ only.
\end{enuminline}
When coupled with $k$-means\texttt{++} initialisation of $\mathbf{Z}$ \citep{arthur2007kmeans} on shallow GP models, these heuristics usually make $\mathbf{T}$ optimisation much cheaper, often requiring only a small number $t^* \leq 3$ of NG updates with fixed step size $\gamma_t = 1$.
Furthermore, in some cases, they also improve the model's overall training performance, possibly helping tame instability of the gradients wrt $\mathbf{Z}$.

\paragraph{Computational Summary.}
W-SVGP and L-SVGP both have per-iteration cost $\mathcal{O}(B M^2 + M^3)$.
For R-SVGP, Hutchinson trace estimation reduces the cost of bound \emph{evaluation} from $\mathcal{O}(B M^2 + M^3)$ to $\mathcal{O}((B + K) M^2)$, where $K$ is the number of probe vectors.
\textit{Optimising} R-SVGP, however, still requires $t^*$ NG updates of $\mathbf{T}$, which contribute a cubic $\mathcal{O}(t^* M^3)$ cost, so that one outer training iteration costs $\mathcal{O}((B + K) M^2 + t^* M^3)$.
Hence, potential speedups over W-SVGP and L-SVGP depend on: the gains from cheaper bound evaluation, the number $t^*$ of NG updates required, and the practical speed of matrix multiplications relative to matrix decompositions on the hardware used, despite their identical cubic asymptotic scaling.
In \cref{sec:efficiency}, we illustrate a setting where, using the heuristics discussed above, these factors favour R-SVGP, leading to faster training than the other variants.

\section{RESULTS}\label{sec:results}
This section investigates three main questions:
\begin{enumerate}\setlength\itemsep{0em}
    \item \textbf{Efficacy}: do the tools introduced in \cref{sec:improving} solve the optimisation issues of R-SVGP?
    \item \textbf{Efficiency}: can R-SVGP be made faster than standard baselines?
    \item \textbf{Generality}: can R-SVGP be used as a drop-in replacement for complex SVGP-based models? 
\end{enumerate}
To answer these questions, we perform experiments on toy datasets, UCI regression benchmarks, and complex models such as deep GPs \citep{salimbeni2017doubly} and convolutional GPs \citep{vdw2017convolutional}.
We compare R-SVGP against W-SVGP, as it is the default parameterisation in popular GP libraries \citep[e.g.,][]{matthews2017gpflow}, and L-SVGP, as it is the parameterisation to which R-SVGP reverts when $\mathbf{T} = \tilde{\mathbf{K}}^{-1}$.
Unless otherwise specified, we write R-SVGP and L-SVGP to refer to the versions with inducing mean preconditioning (\cref{sec:preconditioning}) and, for R-SVGP, NG updates for $\mathbf{T}$ (\cref{sec:natgrad}), and we employ the standard ARD RBF kernel \citep{williams2006gaussian} for all models.
All results are averaged over 5 random seeds, reported as mean $\pm$ standard error (unless noted).
The setup of all experiments below is detailed in \cref{sec:experimental_details}.

\subsection{Efficacy: Toy Datasets}\label{sec:efficacy}
To assess the effectiveness of the NG updates for $\mathbf{T}$ (\cref{sec:natgrad}) and the inducing mean preconditioning (\cref{sec:preconditioning}), we compare versions of the R-SVGP bound that take advantage of either or both of these tools.
In the following, we use suffixes \textsc{N} and \textsc{P} to indicate the presence of NG updates for $\mathbf{T}$ and inducing mean preconditioning, respectively.
In the context of efficacy, the most interesting baseline is the R-SVGP bound without preconditioning and where $\mathbf{T}$ is trained with Adam, as in \citet{vdwilk2022improved}.

The benefits of the tools proposed in \cref{sec:improving} are apparent even with toy datasets.
\cref{fig:snelson_banana_trace} shows single-run training loss traces for each bound optimised on \textsc{snelson}, a one-dimensional regression task, and \textsc{banana}, a two-dimensional binary classification task.
In both cases, the R-SVGP bounds with NG updates (i.e., R-SVGP (N) and R-SVGP (NP)) match the performance of the corresponding L-SVGP bounds (i.e., L-SVGP and L-SVGP (P), respectively).
Conversely, the bounds trained only with Adam suffer from either much slower convergence (R-SVGP) or worse results (R-SVGP (P)) than their NG counterparts.
Furthermore, preconditioning is crucial for the L-SVGP and R-SVGP bounds to match the performance of W-SVGP.
However, for R-SVGP it helps \textit{only} when coupled with NG: the inverse-free preconditioner $\mathbf P^{(\mathrm R)}$ is an effective surrogate for $\tilde{\mathbf K}^{-1}$ only when $\mathbf T$ tracks $\tilde{\mathbf K}^{-1}$ closely.

Overall, R-SVGP (NP) is the only inverse-free variational bound that matches the performance of the Cholesky-based variants on both datasets.

\subsection{Efficiency: UCI Benchmarks}\label{sec:efficiency}
Next, we investigate whether a well-tuned version of R-SVGP, which takes advantage of the heuristics in \cref{sec:stopping,sec:practical}, can be made faster than standard baselines on realistic datasets.
To this end, we consider the \textsc{elevators} ($N = 16599$, $D = 18$) and \textsc{kin40k} ($N = 40000$, $D = 8$) UCI scalar regression datasets.
These datasets have different characteristics: SVGPs are known to perform well on \textsc{elevators}, with performance saturating at $M = 1000$ to $2000$ inducing points, while \textsc{kin40k} is a more challenging dataset where performance keeps improving as $M$ increases.

For all methods, given a number $M$ of inducing points, we initialise the inducing location $\mathbf{Z}$ with $k$-means\texttt{++} and train it alongside the other model parameters using Adam with a learning rate of $5 \times 10^{-3}$.
For R-SVGP, we apply the heuristics discussed in \cref{sec:practical} to regularise the optimisation of $\mathbf{Z}$, with details given in \cref{sec:experimental_details}.
In turn, this allows us to choose a fixed NG step size $\gamma_t = 1$ for $\mathbf{T}$ optimisation, which we stop adaptively using the general criterion \eqref{eq:stopping_frobenius} with tolerance $\epsilon = 5 \times 10^{-3}$.
All methods are trained for $20000$ iterations with batch size $B = 100$.
\cref{fig:uci_nlpd} plots the test negative log-predictive density (NLPD) against training time for each method on both datasets, with $M$ ranging from $1000$ to $4000$.
\begin{figure*}
    \centering
    \includegraphics[width=1.0\textwidth]{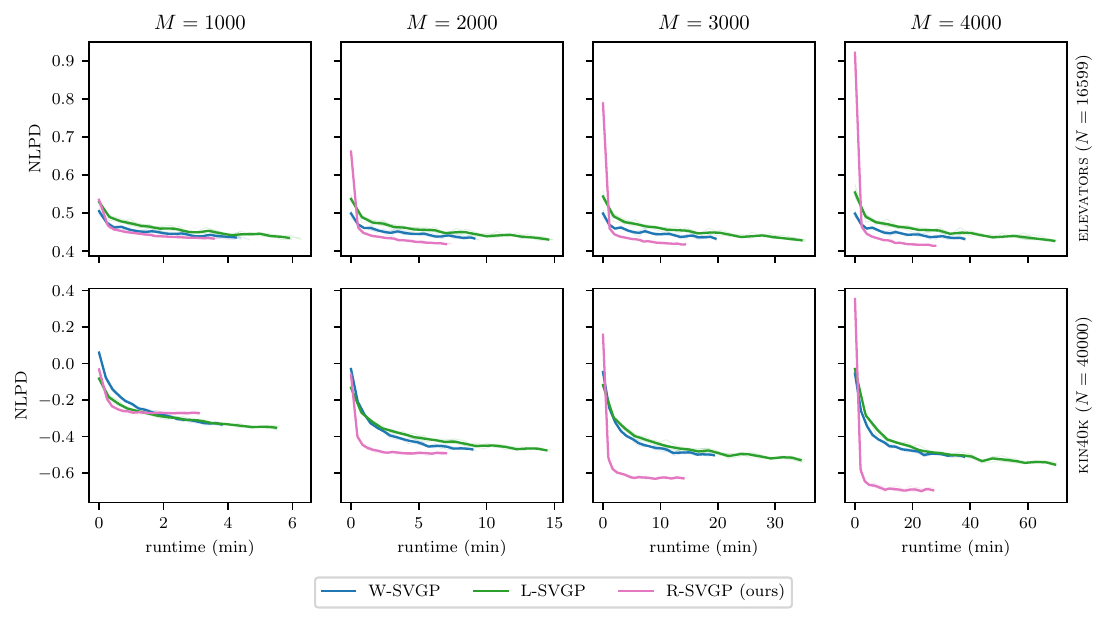}
    \caption{NLPD/runtime on \textsc{elevators} and \textsc{kin40k} datasets for different choices of $M$ and batch size $B = 100$. Lines show the mean over 5 seeds; shaded regions indicate $\pm 1$ standard error (often smaller than line width).}
    \label{fig:uci_nlpd}
\end{figure*}
As expected, the performance of all methods is stable for $M \geq 2000$ on \textsc{elevators}, while it improves as $M$ increases on \textsc{kin40k}.
Overall, R-SVGP shows faster convergence than both W-SVGP and L-SVGP, with a total training time that is up to $2.5\times$ faster than L-SVGP and up to $1.4\times$ faster than W-SVGP in our setup.
On \textsc{elevators}, R-SVGP achieves comparable NLPD to W-SVGP and L-SVGP, while, on \textsc{kin40k}, R-SVGP attains significantly better performance than other baselines for all $M$ except for $M = 1000$.
This shows a potential performance compromise when applying the heuristics from \cref{sec:practical} to the optimisation of $\mathbf{Z}$: while they can improve the overall training dynamics when $M$ is sufficiently large (e.g., $M \geq 2000$ on \textsc{kin40k}), regularising $\mathbf{Z}$ updates too much may lead to suboptimal performance when $M$ is much smaller than the optimal number for a given dataset (e.g., $M = 1000$ on \textsc{kin40k}). 

\subsection{Generality: Beyond Scalar Shallow SVGP}\label{sec:generality}
So far our experiments used shallow, scalar-output SVGPs with RBF kernels.
Here we test whether the R-SVGP bound can be used for more complex SVGP-based models, including multi-output GPs \citep{vdw2020framework}, deep GPs \citep{salimbeni2017doubly}, and non-standard kernels such as convolutional kernels \citep{vdw2017convolutional}.
Our goal in this subsection is \emph{efficacy}, not efficiency: we ask whether R-SVGP can be successfully optimised and compete with standard baselines in terms of predictive performance, without optimising for the fastest training setup.
Accordingly, we \textit{do not} apply the heuristics of \cref{sec:practical} to $\mathbf{Z}$.
For $\mathbf{T}$, we run natural gradients (NG) with a conservative increasing log-linear step-size schedule from $\gamma_0=10^{-5}$ to $\gamma_T=1$ over $10$ steps, and select $t^*$ adaptively using the general criterion \eqref{eq:stopping_frobenius} with a strict tolerance $\epsilon$ between $10^{-9}$ and $10^{-6}$ depending on the specific experiment.
This intentionally conservative routine is \emph{not intended} to yield a speed-up over baselines; our aim is to verify that R-SVGP achieves comparable predictive performance in multi-output, deep, and convolutional-kernel settings.
While we expect the heuristics of \cref{sec:practical} to transfer to these settings, doing so is non-trivial. In particular, regularising the updates of $\mathbf Z$ presumes a good initialisation of the latter: for shallow GPs, $k$-means\texttt{++} is effective, but in more complex models finding a suitable initialisation is more challenging, and we leave a systematic study to future work.

\paragraph{Deep GPs.}
In the formulation of \citet{salimbeni2017doubly}, a deep GP (DGP) is built by stacking $L$ SVGP layers: each layer $\ell \in \{1,\dots,L\}$ has its own inducing location $\mathbf{Z}^{(\ell)}$ and inducing variable $\mathbf{u}^{(\ell)}$, the variational posterior factorises as $\prod_{\ell=1}^L q(\mathbf u^{(\ell)})$, and the ELBO equals a Monte Carlo estimate of the expected log-likelihood through the latent layers minus the sum of layer-wise KL terms.
Consequently, R-SVGP applies to the deep setting layer-wise, with each layer endowed with its own $\mathbf{T}^{(\ell)}$ optimised independently using the tools of \cref{sec:improving}.
When modelling a scalar function $f \colon \mathbb{R}^D\!\to\!\mathbb{R}$, standard DGP implementations \citep{dutordoir2021gpflux} use multi-output hidden layers with width $Q_\ell$ (often $Q_\ell=D$): each hidden layer is modelled as $Q_\ell$ independent SVGPs sharing $\mathbf{Z}^{(\ell)}$ and kernel hyperparameters, but with per-output variational parameters $\{\tilde{\mathbf{m}}^{(\ell,q)}, \tilde{\mathbf{S}}^{(\ell,q)}\}_{q=1}^{Q_\ell}$.
For W-SVGP layers, this sharing implies a single Cholesky decomposition of $\mathbf{K}_{\mathbf{u}^{(\mathbf{\ell})}\mathbf{u}^{(\ell)}}$ is required per hidden layer, regardless of $Q_\ell$.
For L-SVGP or R-SVGP layers, the matrix of interest becomes $\tilde{\mathbf{K}}^{(\ell,q)} = \mathbf{K}_{\mathbf{u}^{(\ell)}\mathbf{u}^{(\ell)}} + \tilde{\mathbf{S}}^{(\ell,q)}$, which depends on the output via $\tilde{\mathbf{S}}^{(\ell,q)}$.
Although each $\tilde{\mathbf K}^{(\ell,q)}$ remains $M\times M$, one now needs $Q_\ell$ distinct factorisations (for L-SVGP) or $Q_\ell$ NG inner loops (for R-SVGP) per training step.
To avoid this mismatch, we share a single $\tilde{\mathbf S}^{(\ell)}$ across outputs within a layer, which restores one decomposition/NG inner loop at the cost of reduced variational flexibility.

We evaluate the three DGP parameterisations on \textsc{kin40k}, ranging $L$ from $1$ to $3$ layers and using $M = 500$ inducing points per layer.
\cref{tab:dgp_kin40k} reports the test NLPD for each method.
\begin{table}
    \caption{NLPD on \textsc{kin40k} for different choices of $L$, $M\!=\!500$ and batch size $B\!=\! 1000$, after 200k iterations. Entries are mean (standard error) over 5 seeds.}
    \label{tab:dgp_kin40k}
    \centering
    \begin{tabular}{lrrr}
        \toprule
        $L$ & W-SVGP & L-SVGP & R-SVGP \\
        \midrule
        1 & -0.38 (0.001) & -0.39 (0.002) & -0.39 (0.002) \\
        2 & -1.86 (0.03) & -1.88 (0.02) & -1.88 (0.02) \\
        3 & -2.39 (0.03) & -2.64 (0.01) & -2.64 (0.01) \\
        \bottomrule
    \end{tabular}
\end{table}
All methods improve as $L$ increases, showcasing the benefits of depth on this dataset. R-SVGP matches the performance of L-SVGP for all $L$, proving that the inverse-free bound can be successfully optimised in deep models using the tools of \cref{sec:improving}.
W-SVGP performs slightly worse for $L = 3$, perhaps because of the increased difficulty of optimising the more flexible variational posterior in deeper models.

\paragraph{Convolutional GPs.}
Convolutional GPs (ConvGPs) \citep{vdw2017convolutional} encode translation–equivariant structure in SVGPs via specialised kernels and interdomain inducing variables that live in patch space rather than input space.
Consequently, R-SVGP and L-SVGP apply unchanged: one simply substitutes the interdomain covariances $(\mathbf{K}_{\mathbf{u}\mathbf{u}},\mathbf{k}_{\mathbf{u}n})$ in the standard formulas.
For multi-class classification with $C$ classes, ConvGPs are typically implemented as $C$ independent per-class latent GPs that share inducing locations $\mathbf{Z}$ and kernel hyperparameters.
Hence, the same cost considerations as in multi-output SVGPs apply: to avoid $C$ per-step factorisations (L-SVGP) or NG inner loops (R-SVGP), we tie a single-channel $\tilde{\mathbf S}$ across classes, trading some variational flexibility relative to W-SVGP for matched computational cost.

We follow the same experimental setup as in \citet{vdw2017convolutional} to evaluate the three parameterisations on the \textsc{mnist} dataset.
\cref{tab:mnist} compares the test error of each method using either a standard RBF kernel or a weighted convolutional kernel \citep{vdw2017convolutional} at the end of training.
\begin{table}
    \caption{Test error (\%) on \textsc{mnist} with RBF and weighted convolutional kernels, $M=750$, $B=200$. Entries are mean (standard error) over 5 seeds.}
    \label{tab:mnist}
    \centering
    \begin{tabular}{lrrr}
        \toprule
        Kernel & W-SVGP & L-SVGP & R-SVGP \\
        \midrule
        RBF & 1.96 (0.02) & 1.95 (0.01) & 1.94 (0.02) \\
        Conv & 1.31 (0.02) & 1.53 (0.02) & 1.53 (0.03) \\
        \bottomrule
    \end{tabular}
\end{table}
All methods benefit from the inductive bias of the convolutional kernel, with W-SVGP slightly outperforming the other methods because of its more flexible variational posterior.
R-SVGP matches the performance of L-SVGP, confirming that the effectiveness of the tools in \cref{sec:improving} extends to non-standard kernels.

\section{RELATED WORK}\label{sec:related}
Every Gaussian process method relies on a numerical (approximation) method for calculating inverse-vector products and determinants, with the Cholesky decomposition being commonly used because of its accuracy.
However, due to its $O(N^2)$ memory and $O(N^3)$ time costs, alternatives have long been investigated. 

In particular, iterative Krylov-subspace solvers \citep{liesen2013krylov} and Conjugate Gradients (CG) \citep{hestenes1952methods} were proposed as alternatives \citep{mackay1997efficient,davies2015effective}.
When terminated early, these methods can provide good approximations at lower computational and memory ($O(N)$) cost, and their reliance on matmuls make them well-suited to modern GPU hardware \citep{gardner2018gpytorch,wang2019exact}. However, selecting when to terminate drastically affects the stability and quality of the results, although partial solutions for automatically finding a good trade-off exist \citep{artemev2021tighter}.

The methods above are limited to full-batch regression with Gaussian likelihoods, where the marginal likelihood is a log Gaussian density evaluation.
By contrast, variational approximations \citep{titsias2009variational} support non-Gaussian likelihoods \citep{hensman2015mcmc}, mini-batching \citet{hensman2013gaussian}, and deep structures \citep{salimbeni2017doubly}.
In these settings, CG is not applicable in the same way, and existing methods therefore rely on Cholesky factorisations, which are poorly matched to low-precision, parallel hardware.

We attempt to bridge these lines of work by proposing an objective for GP inference that is hardware-friendly yet retains the benefits of the SVGP framework.

\section{DISCUSSION}\label{sec:discussion}
This work introduces, to our knowledge, the first practical inverse-free sparse variational GP method that trains stably on realistic datasets without matrix decompositions.
Using only matrix multiplications (matmuls), R-SVGP attains comparable predictive performance to Cholesky-based baselines and serves as a drop-in replacement in deeper and interdomain SVGP models, while offering potential speed-ups when well tuned.
Several avenues for future work remain.

While inverse-free, R-SVGP remains cubic in $M$ due to the matmuls required by the NG optimisation of $\mathbf T$.
Consequently, potential speed-ups over Cholesky-based methods depend on keeping the NG inner loop short.
We provide heuristics that help in shallow GPs, but their benefit in deep, multi-output, or interdomain settings merits further study and tuning for speed.

Moreover, R-SVGP is built on L-SVGP, which, in multi-output regimes, trades some variational flexibility to match the computational cost of W-SVGP.
This suggests that alternative inverse-free parameterisations targeting W-SVGP, such as the construction sketched in \cref{sec:svgp}, represent a promising direction.

Finally, lower-precision regimes (e.g., FP16/FP32), where matmuls fully exploit modern accelerators, may amplify the potential speed and memory advantages of R-SVGP over Cholesky-based alternatives, and we are excited to investigate this systematically.

\subsection*{Acknowledgements}

The authors would like to acknowledge the use of the University of Oxford Advanced Research Computing (ARC) facility \citep{richards2015} in carrying out this work.
Stefano Cortinovis is supported by the EPSRC Centre for Doctoral Training in Modern Statistics and Statistical Machine Learning (EP/S023151/1).

\bibliography{references}

@article{abadi2016tensorflow,
  title={Tensor{F}low: Large-scale machine learning on heterogeneous distributed systems},
  author={Abadi, Mart{\'\i}n and Agarwal, Ashish and Barham, Paul and Brevdo, Eugene and Chen, Zhifeng and Citro, Craig and Corrado, Greg S. and Davis, Andy and Dean, Jeffrey and Devin, Matthieu and others},
  journal={arXiv preprint arXiv:1603.04467},
  year={2016}
}

@article{amari1998natural,
  title={Natural gradient works efficiently in learning},
  author={Amari, Shun-Ichi},
  journal={Neural computation},
  volume={10},
  number={2},
  pages={251--276},
  year={1998},
  publisher={MIT Press}
}

@inproceedings{artemev2021tighter,
  title={Tighter bounds on the log marginal likelihood of {G}aussian process regression using conjugate gradients},
  author={Artemev, Artem and Burt, David R. and van der Wilk, Mark},
  booktitle={International Conference on Machine Learning},
  pages={362--372},
  year={2021},
  organization={PMLR}
}

@inproceedings{arthur2007kmeans,
  author = {Arthur, David and Vassilvitskii, Sergei},
  title = {k-means++: the advantages of careful seeding},
  year = {2007},
  isbn = {9780898716245},
  publisher = {Society for Industrial and Applied Mathematics},
  address = {USA},
  booktitle = {Proceedings of the Eighteenth Annual ACM-SIAM Symposium on Discrete Algorithms},
  pages = {1027–1035},
  numpages = {9},
  location = {New Orleans, Louisiana},
  series = {SODA '07}
}

@article{ben1965iterative,
  title={An iterative method for computing the generalized inverse of an arbitrary matrix},
  author={Ben-Israel, Adi},
  journal={Mathematics of Computation},
  pages={452--455},
  year={1965},
  publisher={JSTOR}
}

@article{blei2017variational,
  title={Variational inference: A review for statisticians},
  author={Blei, David M. and Kucukelbir, Alp and McAuliffe, Jon D.},
  journal={Journal of the American statistical Association},
  volume={112},
  number={518},
  pages={859--877},
  year={2017},
  publisher={Taylor \& Francis}
}

@inproceedings{burt2019rates,
  title = 	 {Rates of Convergence for Sparse Variational {G}aussian Process Regression},
  author =       {Burt, David and Rasmussen, Carl Edward and van der Wilk, Mark},
  booktitle = 	 {Proceedings of the 36th International Conference on Machine Learning},
  pages = 	 {862--871},
  year = 	 {2019},
  editor = 	 {Chaudhuri, Kamalika and Salakhutdinov, Ruslan},
  volume = 	 {97},
  series = 	 {Proceedings of Machine Learning Research},
  month = 	 {09--15 Jun},
  publisher =    {PMLR},
  url = 	 {https://proceedings.mlr.press/v97/burt19a.html}
}

@article{burt2020convergence,
  author  = {David R. Burt and Carl Edward Rasmussen and Mark van der Wilk},
  title   = {Convergence of Sparse Variational Inference in {G}aussian Processes Regression},
  journal = {Journal of Machine Learning Research},
  year    = {2020},
  volume  = {21},
  number  = {131},
  pages   = {1--63},
  url     = {http://jmlr.org/papers/v21/19-1015.html}
}

@phdthesis{davies2015effective,
  title        = {Effective implementation of {G}aussian process regression for machine learning},
  author       = {Alexander Davies},
  year         = 2015,
  school       = {University of Cambridge},
  type         = {PhD thesis}
}

@article{drineas2016randnla,
  title={RandNLA: randomized numerical linear algebra},
  author={Drineas, Petros and Mahoney, Michael W},
  journal={Communications of the ACM},
  volume={59},
  number={6},
  pages={80--90},
  year={2016},
  publisher={ACM New York, NY, USA}
}

@misc{dutordoir2021gpflux,
      title={{GP}flux: A Library for Deep {G}aussian Processes}, 
      author={Vincent Dutordoir and Hugh Salimbeni and Eric Hambro and John McLeod and Felix Leibfried and Artem Artemev and Mark van der Wilk and James Hensman and Marc P. Deisenroth and ST John},
      year={2021},
      eprint={2104.05674},
      archivePrefix={arXiv},
      primaryClass={stat.ML},
      url={https://arxiv.org/abs/2104.05674}, 
}

@article{gardner2018gpytorch,
  title={{GP}y{T}orch: Blackbox matrix-matrix {G}aussian process inference with {GPU} acceleration},
  author={Gardner, Jacob and Pleiss, Geoff and Weinberger, Kilian Q and Bindel, David and Wilson, Andrew G},
  journal={Advances in neural information processing systems},
  volume={31},
  year={2018}
}

@article{hensman2013gaussian,
  title={Gaussian processes for big data},
  author={Hensman, James and Fusi, Nicolo and Lawrence, Neil D.},
  journal={arXiv preprint arXiv:1309.6835},
  year={2013}
}

@article{hensman2015mcmc,
  title={{MCMC} for variationally sparse {G}aussian processes},
  author={Hensman, James and Matthews, Alexander G. de G. and Filippone, Maurizio and Ghahramani, Zoubin},
  journal={Advances in neural information processing systems},
  volume={28},
  year={2015}
}

@article{hestenes1952methods,
  title={Methods of conjugate gradients for solving linear systems},
  author={Hestenes, Magnus R and Stiefel, Eduard and others},
  journal={Journal of research of the National Bureau of Standards},
  volume={49},
  number={6},
  pages={409--436},
  year={1952}
}

@article{hutchinson1989stochastic,
  title={A stochastic estimator of the trace of the influence matrix for {L}aplacian smoothing splines},
  author={Hutchinson, Michael F},
  journal={Communications in Statistics-Simulation and Computation},
  volume={18},
  number={3},
  pages={1059--1076},
  year={1989},
  publisher={Taylor \& Francis}
}

@book{kelley1995iterative,
  title={Iterative methods for linear and nonlinear equations},
  author={Kelley, Carl T},
  year={1995},
  publisher={SIAM}
}

@article{kingma2014adam,
  title={Adam: A method for stochastic optimization},
  author={Kingma, Diederik P.},
  journal={arXiv preprint arXiv:1412.6980},
  year={2014}
}

@book{liesen2013krylov,
  author    = {Liesen, Jörg and Strakoš, Zdeněk},
  title     = {Krylov Subspace Methods: Principles and Analysis},
  publisher = {Oxford University Press},
  year      = {2013},
  address   = {Oxford},
  isbn      = {978-0-19-965541-0}
}

@article{mackay1997efficient,
  title={Efficient implementation of {G}aussian processes},
  author={MacKay, David J. C. and Gibbs, Mark},
  journal={Neural Computation},
  year={1997}
}

@article{matthews2017gpflow,
  title={{GPflow}: A {G}aussian process library using {T}ensor{F}low},
  author={Matthews, Alexander G. de G. and van der Wilk, Mark and Nickson, Tom and Fujii, Keisuke and Boukouvalas, Alexis and Le, Pablo and Ghahramani, Zoubin and Hensman, James and others},
  journal={Journal of Machine Learning Research},
  volume={18},
  number={40},
  pages={1--6},
  year={2017}
}

@phdthesis{matthewsthesis,
  title={Scalable {G}aussian process inference using variational methods},
  author={Matthews, Alexander G. de G.},
  school={University of Cambridge},
  year={2017}
}

@misc{panosFullyScalableGaussian2018,
	title = {Fully scalable {G}aussian processes using subspace inducing inputs},
	url = {http://arxiv.org/abs/1807.02537},
	doi = {10.48550/arXiv.1807.02537},
	urldate = {2024-03-15},
	publisher = {arXiv},
	author = {Panos, Aristeidis and Dellaportas, Petros and Titsias, Michalis K.},
	month = jul,
	year = {2018},
	note = {arXiv:1807.02537 [cs, stat]},
}

@article{quinonero2005unifying,
  title={A unifying view of sparse approximate {G}aussian process regression},
  author={Qui{\~n}onero-Candela, Joaquin and Rasmussen, Carl E.},
  journal={The Journal of Machine Learning Research},
  year={2005},
}

@misc{richards2015,
  title={University of {O}xford {A}dvanced {R}esearch {C}omputing},
  author={Richards, Andrew},
  year={2015},
  doi={Zenodo.10.5281/zenodo.22558}
}

@article{salimbeni2017doubly,
  title={Doubly stochastic variational inference for deep {G}aussian processes},
  author={Salimbeni, Hugh and Deisenroth, Marc},
  journal={Advances in neural information processing systems},
  volume={30},
  year={2017}
}

@inproceedings{salimbeni2018natural,
  title={Natural gradients in practice: Non-conjugate variational inference in {G}aussian process models},
  author={Salimbeni, Hugh and Eleftheriadis, Stefanos and Hensman, James},
  booktitle={International Conference on Artificial Intelligence and Statistics},
  pages={689--697},
  year={2018},
  organization={PMLR}
}

@article{snelson2005sparse,
  title={Sparse {G}aussian processes using pseudo-inputs},
  author={Snelson, Edward and Ghahramani, Zoubin},
  journal={Advances in neural information processing systems},
  volume={18},
  year={2005}
}

@inproceedings{titsias2009variational,
  title = 	 {Variational Learning of Inducing Variables in Sparse {G}aussian Processes},
  author = 	 {Titsias, Michalis},
  booktitle = 	 {International Conference on Artificial Intelligence and Statistics},
  pages = 	 {567--574},
  year = 	 {2009},
  organization={PMLR}
}

@article{vdw2017convolutional,
  title={Convolutional {G}aussian processes},
  author={van der Wilk, Mark and Rasmussen, Carl Edward and Hensman, James},
  journal={Advances in neural information processing systems},
  volume={30},
  year={2017}
}

@article{vdw2020framework,
  title={A framework for interdomain and multioutput {G}aussian processes},
  author={van der Wilk, Mark and Dutordoir, Vincent and John, ST and Artemev, Artem and Adam, Vincent and Hensman, James},
  journal={arXiv preprint arXiv:2003.01115},
  year={2020}
}

@inproceedings{vdwilk2020variational,
  title={Variational {G}aussian process models without matrix inverses},
  author={van der Wilk, Mark and John, ST and Artemev, Artem and Hensman, James},
  booktitle={Symposium on Advances in Approximate Bayesian Inference},
  pages={1--9},
  year={2020},
  organization={PMLR}
}

@inproceedings{vdwilk2022improved,
  title={Improved Inverse-Free Variational Bounds for Sparse {G}aussian Processes},
  author={van der Wilk, Mark and Artemev, Artem and Hensman, James},
  booktitle={Fourth Symposium on Advances in Approximate Bayesian Inference},
  year={2022}
}

@phdthesis{vdwthesis,
  title={Sparse {G}aussian process approximations and applications},
  author={van der Wilk, Mark},
  year={2019},
  school={University of Cambridge},
  url={https://www.repository.cam.ac.uk/handle/1810/288347},
}

@article{wang2019exact,
  title={Exact {G}aussian processes on a million data points},
  author={Wang, Ke and Pleiss, Geoff and Gardner, Jacob and Tyree, Stephen and Weinberger, Kilian Q and Wilson, Andrew Gordon},
  journal={Advances in neural information processing systems},
  volume={32},
  year={2019}
}

@book{williams2006gaussian,
  title={Gaussian processes for machine learning},
  author={Williams, Christopher K. I. and Rasmussen, Carl E.},
  volume={2},
  year={2006},
  publisher={MIT press Cambridge, MA}
}

\clearpage

\appendix
\onecolumn

\begin{center}
    \Large \textbf{Inverse-Free Sparse Variational Gaussian Processes: \\
Supplementary Material}
\end{center}

\setcounter{section}{0}
\setcounter{table}{0}
\setcounter{figure}{0}
\setcounter{equation}{0}

\makeatletter
\renewcommand \thesection{S\@arabic\c@section}
\renewcommand \thetable{S\@arabic\c@table}
\renewcommand \thefigure{S\@arabic\c@figure}
\renewcommand \theequation{S\arabic{equation}}
\makeatother

The supplementary material contains the proofs of \cref{prop:natgrad,prop:grad,prop:lb} in \cref{sec:improving}, a brief derivation of the alternative inverse-free bound mentioned in \cref{sec:svgp}, as well as additional details on the experiments in \cref{sec:results} and further experimental results.
All sections and equations in the supplementary material are prefixed with an `S' for clarity.

\section{PROOF OF PROPOSITION \texorpdfstring{\ref{prop:natgrad}}{1}}\label{app:natgrad}
\begin{proof}
    Given the decomposition for our variational parameter $\mathbf{T} = \mathbf{L}\mathbf{L}^\top\!$, our goal is to find the optimal $\mathbf{L}$ so that $\mathbf{T} = (\mathbf{K_{uu}} + \tilde{\mathbf{S}})^{-1}$, without explicitly computing the inverse.
    We consider the problem of finding the Cholesky factor of the inverse of a matrix $\mathbf A$ via:
    \begin{align}
        \mathbf L_{\mathbf A^{-1}} = \arg\min_{\mathbf L}\,\ell_{\mathbf A}(\mathbf L) \qquad \ell_{\mathbf A}(\mathbf L) := \KL\left[\mathcal{N}(\mathbf 0, \mathbf L \mathbf L^\top)\vert\vert\mathcal{N}(\mathbf 0, \mathbf A^{-1})\right].
    \end{align}
    The above optimisation problem describes the update we take in the direction of the natural gradient:
    \begin{equation}
        \tilde{\mathbf g} = \mathbf{F}^{-1}\mathbf{g}, \label{eq:natgrad_formal}
    \end{equation}
    where $\mathbf g$ and $\tilde{\mathbf g}$ are column vectors denoting the gradient and the natural gradient of $\ell_\mathbf{A}$ wrt $\mathbf L$, while we denote with $\mathbf{F}$ the Fisher information matrix of the distribution $p(\mathbf{x}; \mathbf{L}) = \mathcal{N}(\mathbf x \vert\mathbf{0}, \mathbf{L}\mathbf{L}^\top)$ parameterised by $\mathbf{L}$:
    \begin{equation}
        \mathbf{F} = -\mathbb{E}_{p}\left[\frac{\partial^2}{\partial \vect(\mathbf{L})\ \partial \vect(\mathbf{L})^\top} \log p(\mathbf{x}; \mathbf{L})\right] \label{eq:fisher},
    \end{equation}
    with $\vect(\cdot)$ the column-wise vectorisation operator.
    
    We start by computing the gradient of the loss $\nabla \ell_\mathbf{A}$ wrt $\mathbf{L}$. As in \cref{eq:natgrad_loss},
    \begin{equation}
        \ell_\mathbf{A}(\mathbf{L}) = \KL\left[p(\mathbf{x}; \mathbf{L}) || \mathcal{N}(\mathbf{0}, \mathbf{A}^{-1})\right] = \frac{1}{2} \left(\tr(\mathbf{A} \mathbf{L} \mathbf{L}^\top) -\log |\mathbf{L}\mathbf{L}^\top| \right) + c_1,
    \end{equation}
    where $c_1$ is a constant that does not depend on $\mathbf{L}$. The first differential of $\ell_\mathbf{A}$ with respect to $\mathbf{L}$ is given by
    \begin{equation}
        \diff \ell_\mathbf{A} = \frac{1}{2} \underbrace{\diff \tr (\mathbf{A} \mathbf{L}\mathbf{L}^\top)}_{(a)} - \frac{1}{2} \underbrace{\diff \log |\mathbf{L}\mathbf{L}^\top|}_{(b)}. \label{eq:diff_ellA}
    \end{equation}
    Term $(a)$ can be computed as
    \begin{align}
        \diff \tr (\mathbf{A} \mathbf{L}\mathbf{L}^\top) &= \tr(\mathbf{A} \diff (\mathbf{L} \mathbf{L}^\top)) \nonumber \\
        &= \tr(\mathbf{A} \diff \mathbf{L} \mathbf{L}^\top + \mathbf{A} \mathbf{L} \diff \mathbf{L}^\top) \nonumber \\
        &= \tr(\mathbf{L}^\top \mathbf{A} \diff \mathbf{L}) + \tr(\mathbf{A} \mathbf{L} \diff \mathbf{L}^\top) \nonumber \\
        &= 2\vect(\mathbf A \mathbf L)^\top \vect(\diff \mathbf L),
    \end{align}
    where the last equality holds because $\mathbf{A}$ is symmetric. Term $(b)$ is given by
    \begin{align}
        \diff \log |\mathbf{L}\mathbf{L}^\top| &= \frac{1}{|\mathbf{L}\mathbf{L}^\top|} \diff |\mathbf{L}\mathbf{L}^\top| \nonumber \\
        &= \frac{1}{|\mathbf{L}\mathbf{L}^\top|} |\mathbf{L}\mathbf{L}^\top| \tr(\mathbf{L}^{-\top} \mathbf{L}^{-1} \diff (\mathbf{L} \mathbf{L}^\top)) \nonumber \\
        &= \tr(\mathbf{L}^{-\top} \mathbf{L}^{-1} \diff \mathbf{L} \mathbf{L}^\top + \mathbf{L}^{-\top} \mathbf{L}^{-1}  \mathbf{L} \diff \mathbf{L}^\top) \nonumber \\
        &= \tr(\mathbf{L}^{-1} \diff \mathbf{L}) + \tr(\mathbf{L}^{-\top} \diff \mathbf{L}^\top) \nonumber \\
        &= 2\vect(\mathbf L^{-\top})^\top \vect(\diff \mathbf L) \label{eq:diff_logLLt}.
    \end{align}
    Putting it all together, the differential of $\ell_\mathbf{A}$ takes the form
    \begin{equation}
        \diff \ell_\mathbf{A} = (\vect(\mathbf A \mathbf L) - \vect(\mathbf L^{-\top}))^\top \vect(\diff \mathbf L),
    \end{equation}
    which implies that
    \begin{equation}
        \mathbf g = \vect(\mathbf A \mathbf{L}) - \vect(\mathbf{L}^{-\top}).
    \end{equation}

    We now move to computing the Fisher information matrix $\mathbf{F}$ of $p(\mathbf{x}; \mathbf{L})$ wrt $\mathbf{L}$. We have that
    \begin{equation}
        \log p(\mathbf{x}; \mathbf{L}) = -\frac{1}{2} \log |\mathbf{L}\mathbf{L}^\top| - \frac{1}{2} \mathbf{x}^\top \mathbf{L}^{-\top}\mathbf{L}^{-1} \mathbf{x} + c_2,
    \end{equation}
    where $c_2$ is a constant that does not depend on $\mathbf{L}$. The quantity of interest is the expected value of the second differential of $\log p(\mathbf{x}; \mathbf{L})$ wrt $p$, which takes the form
    \begin{equation}
        \mathbb{E}_p \left[\diff^2 \log p(\mathbf{x}; \mathbf{L})\right] = -\frac{1}{2} \underbrace{\diff^2 \log |\mathbf{L}\mathbf{L}^\top|}_{(c)} - \frac{1}{2} \mathbb{E}_p \left[\underbrace{\diff^2 (\mathbf{x}^\top \mathbf{L}^{-\top}\mathbf{L}^{-1} \mathbf{x})}_{(d)}\right] \label{eq:diff}
    \end{equation}
    because $\log |\mathbf{L}\mathbf{L}^\top|$ does not depend on $\mathbf{x}$. By taking advantage of \cref{eq:diff_logLLt}, term $(c)$ is given by
    \begin{align}
        \diff^2 \log |\mathbf{L}\mathbf{L}^\top| &= 2 \diff \tr(\mathbf{L}^{-1} \diff \mathbf{L}) \nonumber \\
        &= -2 \tr(\mathbf{L}^{-1} \diff \mathbf{L} \mathbf{L}^{-1} \diff \mathbf{L}) \nonumber \\
        &= -2 \vect((\mathbf{L}^{-1} \diff \mathbf{L})^\top)^\top \vect(\mathbf{L}^{-1} \diff \mathbf{L}) \label{eq:d2_logLLt_step_2} \\
        &= -2 \left[\mathbf{C} \vect(\mathbf{L}^{-1} \diff \mathbf{L}))\right]^\top \vect(\mathbf{L}^{-1} \diff \mathbf{L}) \nonumber \\
        &= -2 \left[\mathbf{C}(\mathbf{I} \otimes \mathbf{L}^{-1}) \vect(\mathbf{d}\mathbf{L})\right]^\top (\mathbf{I} \otimes \mathbf{L}^{-1}) \vect(\diff\mathbf{L}) \nonumber \\
        &= -2 \vect(\mathbf{d}\mathbf{L})^\top (\mathbf{I} \otimes \mathbf{L}^{-\top}) \mathbf{C}(\mathbf{I} \otimes \mathbf{L}^{-1}) \vect(\diff \mathbf{L}) \nonumber \\
        &= -2 \vect(\diff\mathbf{L})^\top \mathbf{C} (\mathbf{L}^{-\top} \otimes \mathbf{L}^{-1}) \vect(\diff\mathbf{L}), \label{eq:d2_logLLt}
    \end{align}
    where \cref{eq:d2_logLLt_step_2} follows from $\tr(\mathbf{X}^\top \mathbf{Y}) = \vect(\mathbf{X})^\top \vect(\mathbf{Y})$, and $\mathbf{C} = \mathbf{C}^\top$ is the commutator matrix such that $\mathbf C \vect(\mathbf{X}) = \vect(\mathbf{X}^\top)$ and $\mathbf{C}(\mathbf{X} \otimes \mathbf{Y}) = (\mathbf{Y} \otimes \mathbf{X}) \mathbf{C}$.
    
    Similarly, term $(d)$ can be computed starting from the first differential
    \begin{align}
        \diff (\mathbf{x}^\top \mathbf{L}^{-\top}\mathbf{L}^{-1} \mathbf{x}) &= \mathbf{x}^\top \diff (\mathbf{L}^{-\top}\mathbf{L}^{-1}) \mathbf{x} \nonumber \\
        &= -\mathbf{x}^\top (\mathbf{L}^{-\top} \diff \mathbf{L}^\top \mathbf{L}^{-\top} \mathbf{L}^{-1} + \mathbf{L}^{-\top}\mathbf{L}^{-1} \diff \mathbf{L} \mathbf{L}^{-1}) \mathbf{x} \nonumber \\
        &= -\tr(\mathbf{L}^{-\top} \mathbf{L}^{-1}\mathbf{x} \mathbf{x}^\top \mathbf{L}^{-\top} \diff \mathbf{L}^\top) - \tr(\mathbf{L}^{-1} \mathbf{x} \mathbf{x}^\top \mathbf{L}^{-\top} \mathbf{L}^{-1} \diff \mathbf{L}) \nonumber \\
        &= -2\tr(\mathbf{L}^{-1} \mathbf{x} \mathbf{x}^\top \mathbf{L}^{-\top} \mathbf{L}^{-1} \diff \mathbf{L}).
    \end{align}
    Then, the second differential takes the form
    \begin{align}
        \diff^2 (\mathbf{x}^\top \mathbf{L}^{-\top}\mathbf{L}^{-1} \mathbf{x}) &= -2\tr(\diff(\mathbf{L}^{-1} \mathbf{x} \mathbf{x}^\top \mathbf{L}^{-\top} \mathbf{L}^{-1} \diff \mathbf{L})) \nonumber \\
        &= 2 \left[\tr(\mathbf{L}^{-1} \diff \mathbf{L} \mathbf{L}^{-1} \mathbf{x} \mathbf{x}^\top \mathbf{L}^{-\top} \mathbf{L}^{-1} \diff \mathbf{L}) \right. \nonumber \\ 
        &\qquad\qquad + \tr(\mathbf{L}^{-1} \mathbf{x} \mathbf{x}^\top \mathbf{L}^{-\top} \diff \mathbf{L}^\top \mathbf{L}^{-\top} \mathbf{L}^{-1} \diff \mathbf{L}) \nonumber \\
        &\qquad\qquad \left. + \tr(\mathbf{L}^{-1} \mathbf{x} \mathbf{x}^\top \mathbf{L}^{-\top} \mathbf{L}^{-1} \diff \mathbf{L} \mathbf{L}^{-1} \diff \mathbf{L}) \right].
    \end{align}
    By taking the expectation wrt $p$, we have that
    \begin{align}
        \mathbb{E}_p \left[\diff^2 (\mathbf{x}^\top \mathbf{L}^{-\top}\mathbf{L}^{-1} \mathbf{x}) \right] &= 2 \left[\tr(\mathbf{L}^{-1} \diff \mathbf{L} \mathbf{L}^{-1} \textcolor{blue}{\mathbf{L} \mathbf{L}^\top} \mathbf{L}^{-\top} \mathbf{L}^{-1} \diff \mathbf{L}) \right. \nonumber \\ 
        &\qquad\qquad + \tr(\mathbf{L}^{-1} \textcolor{blue}{\mathbf{L} \mathbf{L}^\top} \mathbf{L}^{-\top} \diff \mathbf{L}^\top \mathbf{L}^{-\top} \mathbf{L}^{-1} \diff \mathbf{L}) \nonumber \\
        &\qquad\qquad \left. + \tr(\mathbf{L}^{-1} \textcolor{blue}{\mathbf{L} \mathbf{L}^\top} \mathbf{L}^{-\top} \mathbf{L}^{-1} \diff \mathbf{L} \mathbf{L}^{-1} \diff \mathbf{L}) \right] \nonumber \\
        &= 2 \left[2 \tr(\mathbf{L}^{-1} \diff \mathbf{L} \mathbf{L}^{-1} \diff \mathbf{L}) + \tr(\diff \mathbf{L}^\top \mathbf{L}^{-\top} \mathbf{L}^{-1} \diff \mathbf{L}) \right] \nonumber \\
        &= 2 \left[2 \vect(\diff\mathbf{L})^\top \mathbf{C} (\mathbf{L}^{-\top} \otimes \mathbf{L}^{-1}) \vect(\diff\mathbf{L}) \right. \label{eq:d2_dxLLx_step_3} \\
        &\qquad\qquad \left.+ \vect(\diff \mathbf{L})^\top \vect(\mathbf{L}^{-\top} \mathbf{L}^{-1} \diff \mathbf{L})\right] \nonumber \\
        &= 2 \vect(\diff \mathbf{L})^\top \left[2 \mathbf{C} (\mathbf{L}^{-\top} \otimes \mathbf{L}^{-1}) + (\mathbf{I} \otimes \mathbf{L}^{-\top} \mathbf{L}^{-1})\right] \vect(\diff \mathbf{L}),
    \end{align}
    where the first component of \cref{eq:d2_dxLLx_step_3} follows from the expression already derived in \cref{eq:d2_logLLt}. Overall, the expectation in \cref{eq:diff} becomes
    \begin{equation}
        \mathbb{E}_p \left[\diff^2 \log p(\mathbf{x}; \mathbf{L})\right] = - \vect(\diff \mathbf{L})^\top \left[\mathbf{C} (\mathbf{L}^{-\top} \otimes \mathbf{L}^{-1}) + (\mathbf{I} \otimes \mathbf{L}^{-\top} \mathbf{L}^{-1})\right] \vect(\diff \mathbf{L}),
    \end{equation}
    which implies that the Fisher information matrix $\mathbf{F}$ takes the form
    \begin{align}
        \mathbf{F} &= -\mathbb{E}_{p}\left[\frac{\partial^2}{\partial \vect{\mathbf{L}} \partial \vect{\mathbf{L}}^\top} \log p(\mathbf{x}; \mathbf{L})\right] \nonumber \\
        &= \mathbf{C} (\mathbf{L}^{-\top} \otimes \mathbf{L}^{-1}) + (\mathbf{I} \otimes \mathbf{L}^{-\top} \mathbf{L}^{-1}) \nonumber \\
        &= (\mathbf{I} \otimes \mathbf{L}^{-\top}) (\mathbf{I} + \mathbf{C}) (\mathbf{I} \otimes \mathbf{L}^{-1}).
    \end{align}

    By rearranging the terms of \cref{eq:natgrad_formal}, we have that the natural gradient may be found without explicitly computing the inverse $\mathbf{F}$ by solving the equation
    \begin{equation}
        \mathbf{F} \tilde{\mathbf g} = \mathbf g \label{eq:natgrad_eq}
    \end{equation}
    for $\tilde{\mathbf g}$, under the constraint that the latter is the vectorisation of a lower triangular. Before we proceed, let us introduce the matrix form of the natural gradient vector as $\tilde{\mathbf G} = \unvec(\tilde{\mathbf g})$. Then we compute
    \begin{align}
        \mathbf{F} \tilde{\mathbf g} &= (\mathbf{I} \otimes \mathbf{L}^{-\top}) (\mathbf{I} + \mathbf{C}) (\mathbf{I} \otimes \mathbf{L}^{-1}) \tilde{\mathbf g} \nonumber \\
        &= (\mathbf{I} \otimes \mathbf{L}^{-\top}) (\mathbf{I} + \mathbf{C}) \vect(\mathbf{L}^{-1} \tilde{\mathbf G}) \nonumber \\
        &= (\mathbf{I} \otimes \mathbf{L}^{-\top}) \vect(\mathbf{L}^{-1} \tilde{\mathbf G} + (\mathbf{L}^{-1} \tilde{\mathbf G})^\top) \nonumber \\
        &= \vect(\mathbf{L}^{-\top} \mathbf{L}^{-1} \tilde{\mathbf G} + \mathbf{L}^{-\top} (\mathbf{L}^{-1} \tilde{\mathbf G})^\top).
    \end{align}
    Then, \cref{eq:natgrad_eq} is equivalent to
    \begin{equation}
        \mathbf{L}^{-\top} \mathbf{L}^{-1} \tilde{\mathbf G} + \mathbf{L}^{-\top} (\mathbf{L}^{-1} \tilde{\mathbf G})^\top = \mathbf{A} \mathbf{L} - \mathbf{L}^{-\top}.
    \end{equation}
    By left-multiplying both sides by $\mathbf{L}^\top$, we obtain
    \begin{align}
        \mathbf{L}^{-1} \tilde{\mathbf G} + (\mathbf{L}^{-1} \tilde{\mathbf G})^\top &= \mathbf{L}^\top \mathbf{A} \mathbf{L} - \mathbf{I} \nonumber \\
        &= \tril(\mathbf{L}^\top \mathbf{A} \mathbf{L}) + \triu(\mathbf{L}^\top \mathbf{A} \mathbf{L}) - \diag(\mathbf{L}^\top \mathbf{A} \mathbf{L}) - \mathbf{I} \nonumber \\
        &= \tril(\mathbf{L}^\top \mathbf{A} \mathbf{L}) - \frac{1}{2}(\mathbf{I} + \diag(\mathbf{L}^\top \mathbf{A} \mathbf{L})) \nonumber \\
        &\qquad\qquad + \left[\tril(\mathbf{L}^\top \mathbf{A} \mathbf{L}) - \frac{1}{2}(\mathbf{I} + \diag(\mathbf{L}^\top \mathbf{A} \mathbf{L}))\right]^\top,
    \end{align}
    which implies that
    \begin{equation}
        \mathbf{L}^{-1} \tilde{\mathbf G} = \tril(\mathbf{L}^\top \mathbf{A} \mathbf{L}) - \frac{1}{2}(\mathbf{I} + \diag(\mathbf{L}^\top \mathbf{A} \mathbf{L})).
    \end{equation}
    Finally, by left-multiplying both sides by $\mathbf{L}$, we find that
    \begin{equation}
        \tilde{\mathbf G} = \mathbf{L} \left[\tril(\mathbf{L}^\top \mathbf{A} \mathbf{L}) - \frac{1}{2}(\mathbf{I} + \diag(\mathbf{L}^\top \mathbf{A} \mathbf{L}))\right].
    \end{equation}
\end{proof}

Notice that a similar procedure as the one above can be used to derive the natural gradient of the loss $\ell_\mathbf{A}$ with respect to different parameterisations of the covariance matrix of $p(\mathbf{x})$. This includes the marginal parameterisation $p(\mathbf{x}; \mathbf{T}) = \mathcal{N}(\mathbf{0}, \mathbf{T})$, as well as a general square root parameterisation $p(\mathbf{x}; \mathbf{B}) = \mathcal{N}(\mathbf{0}, \mathbf{B}\mathbf{B})$. In particular, it can be shown that, when using the marginal parameterisation, natural gradient descent with unit step size recovers the well-known Newtonian iteration for computing the inverse of a matrix \citep{ben1965iterative}.

\section{PROOF OF PROPOSITION \texorpdfstring{\ref{prop:grad}}{2}}\label{app:grad}
All gradients and differentials below are taken with respect to $\bm\xi=\{\bm\theta,\mathbf Z,\tilde{\mathbf m},\tilde{\mathbf S}\}$.
We show that the first differentials of $\tilde{\mathcal L}_{\text{lsvgp}}$ and $\tilde{\mathcal L}_{\text{rsvgp}}$ are equal when $\mathbf T=\operatorname{stopgrad}(\tilde{\mathbf K}^{-1})$, from which the gradient equivalence follows.
For ease of reference, we start by restating the SVGP ELBO \eqref{eq:elbo}, the preconditioned L-SVGP bound (\cref{sec:svgp}), and the preconditioned R-SVGP bound (\cref{sec:preconditioning}).

The general SVGP ELBO is
\begin{equation}
    \mathcal{L} = \sum_{n = 1}^N \underbrace{\mathbb{E}_{\mathcal{N}(\mu_n, \sigma_n^2)} \left[\log p(y_n | f(\mathbf{x}_n))\right]}_{(a)} - \underbrace{\KL\left[q(\mathbf{u}) || p(\mathbf{u}) \right]}_{(b)}.
\end{equation}
For L-SVGP, substituting $\mathbf{P}^{(\mathrm L)} = \tilde{\mathbf{K}^{-1}}$, $\tilde{\mathcal{L}}_\text{lsvgp}$ is defined by plugging the following expressions into the ELBO:
\begin{align}
    \mu_n &= \mathbf{k}_{n \mathbf{u}} \mathbf{P}^{(\mathrm L)} \tilde{\mathbf{m}}, \nonumber \\
    \sigma_n^2 &= k_{nn} - \mathbf{k}_{n \mathbf{u}} \mathbf{P}^{(\mathrm L)} \mathbf{k}_{\mathbf{u} n}, \nonumber \\
    \KL\left[q(\mathbf{u}) || p(\mathbf{u})\right] &= \frac{1}{2}\left(-\textcolor{blue}{\text{tr}(\mathbf{P}^{(\mathrm L)} \mathbf{K}_{\mathbf{u} \mathbf{u}})} + \textcolor{blue}{\tilde{\mathbf{m}}^\top \mathbf{P}^{(\mathrm L)} \mathbf{K}_{\mathbf{u} \mathbf{u}} \mathbf{P}^{(\mathrm L)} \tilde{\mathbf{m}}} + \log |\tilde{\mathbf{K}}| - \log |\tilde{\mathbf{S}}|\right) =: \KL_{\text{lsvgp}}. \label{eq:kl_lsvgp_app}
\end{align}
For R-SVGP, $\tilde{\mathcal{L}}_\text{rsvgp}$ is defined by plugging the following expressions into the ELBO:
\begin{align}
  \mu_n &= \mathbf k_{n\mathbf u}\mathbf P^{(\mathrm R)}\tilde{\mathbf m}, \nonumber \\
  \sigma_n^2 &= k_{nn}-\mathbf k_{n\mathbf u}\mathbf P^{(\mathrm R)}\mathbf k_{\mathbf u n}, \nonumber \\
  \KL\left[q(\mathbf u)\,\|\,p(\mathbf u)\right] &\leq \frac{1}{2}\left(-\textcolor{blue}{\operatorname{tr}(\mathbf P^{(\mathrm R)}\mathbf K_{\mathbf u\mathbf u})}+\operatorname{tr}(\tilde{\mathbf K}\mathbf T)-M + \textcolor{blue}{\tilde{\mathbf m}^\top \mathbf P^{(\mathrm R)}\mathbf K_{\mathbf u\mathbf u}\mathbf P^{(\mathrm R)}\tilde{\mathbf m}}-\log|\mathbf T|-\log|\tilde{\mathbf S}|\right) =: \KL_{\text{rsvgp}}. \label{eq:kl_rsvgp_app}
\end{align}

Consider the preconditioners $\mathbf P^{(\mathrm L)}=\tilde{\mathbf K}^{-1}$ and $\mathbf P^{(\mathrm R)}=2\mathbf T-\mathbf T\tilde{\mathbf K}\mathbf T$.
Note that $\mathbf{T}$ does not depend on $\bm\xi$, i.e., $\mathrm{d}\mathbf T = 0$.
Their differentials are
\begin{align}
    \mathrm{d}\mathbf P^{(\mathrm L)} &= \mathrm{d}(\tilde{\mathbf{K}}^{-1}) = -\tilde{\mathbf K}^{-1}(\mathrm{d}\tilde{\mathbf K})\tilde{\mathbf K}^{-1}, \nonumber \\
    \mathrm{d}\mathbf P^{(\mathrm R)} &= \mathrm{d}(2\mathbf T-\mathbf T\tilde{\mathbf K}\mathbf T) = -\mathbf T(\mathrm{d}\tilde{\mathbf K})\mathbf T.
\end{align}
Hence, we have
\begin{equation}
    \mathbf{P}^{(\mathrm L)} = \mathbf{P}^{(\mathrm R)} \text{ and } \mathrm{d}\mathbf P^{(\mathrm L)} = \mathrm{d}\mathbf P^{(\mathrm R)} \quad\text{when}\quad \mathbf{T} = \mathrm{stopgrad}(\tilde{\mathbf{K}}^{-1}), \label{eq:preconditioner_eq}
\end{equation}
which is the key observation for the rest of the proof.

Consider the variational expectation terms $(a)$ in the ELBO.
These depend on $\bm\xi$ only through $\mu_n$ and $\sigma_n^2$.
In turn, $\mu_n$ and $\sigma_n^2$ differ between L-SVGP and R-SVGP only through $\mathbf P^{(\mathrm L)}$ and $\mathbf P^{(\mathrm R)}$, respectively.
Hence, by \eqref{eq:preconditioner_eq}, the differentials of the variational expectation terms match when $\mathbf T=\operatorname{stopgrad}(\tilde{\mathbf K}^{-1})$.

Next, consider the KL term $(b)$ in the ELBO, denoted by $\KL_{\text{lsvgp}}$ \eqref{eq:kl_lsvgp_app} and $\KL_{\text{rsvgp}}$ \eqref{eq:kl_rsvgp_app} for L-SVGP and R-SVGP, respectively.
The terms coloured in \textcolor{blue}{blue} in their expressions also differ between L-SVGP and R-SVGP only through $\mathbf P^{(\mathrm L)}$ and $\mathbf P^{(\mathrm R)}$.
As a result, by \eqref{eq:preconditioner_eq}, the differentials of these terms match when $\mathbf T=\operatorname{stopgrad}(\tilde{\mathbf K}^{-1})$.
Excluding multiplicative constants and the terms in common between $\KL_{\text{lsvgp}}$ and $\KL_{\text{rsvgp}}$ (i.e., $\log|\tilde{\mathbf S}|$), the differentials of the remaining terms are given by
\begin{equation}
    \mathrm{d} \log|\tilde{\mathbf{K}}| = \mathrm{tr}(\tilde{\mathbf{K}}^{-1} \mathrm{d}(\tilde{\mathbf{K}}))
\end{equation}
for L-SVGP, and
\begin{equation}
    \mathrm{d}\left(\mathrm{tr}(\tilde{\mathbf{K}}\mathbf{T}) - M - \log|\mathbf{T}|\right) = \mathrm{tr}\left(\mathbf{T} \mathrm{d}\tilde{\mathbf{K}}\right)
\end{equation}
for R-SVGP, which match when $\mathbf{T} = \operatorname{stopgrad}(\tilde{\mathbf{K}}^{-1})$.
Therefore, the differentials of the KL terms, and in turn of $\tilde{\mathcal L}_{\text{lsvgp}}$ and $\tilde{\mathcal L}_{\text{rsvgp}}$, match when $\mathbf T=\operatorname{stopgrad}(\tilde{\mathbf K}^{-1})$, as desired.

Lastly, the same is not true for the naive preconditioner $\mathbf{P} = \mathbf{T}$.
In particular, the second part of \cref{eq:preconditioner_eq} ceases to hold when $\mathbf{T} = \operatorname{stopgrad}(\tilde{\mathbf{K}}^{-1})$, as $\mathrm{d} \mathbf{P} = \mathrm{d} \mathbf{T} = 0$.

\section{PROOF OF PROPOSITION \texorpdfstring{\ref{prop:lb}}{3}}\label{app:lb}
\begin{proof}
    As in the statement of \cref{prop:lb}, we use the shorthand $\tilde{\mathbf{K}}_{-} = \mathbf{K}_{\mathbf{u}\mathbf{u}} + \tilde{\mathbf{S}}'$, so that $\tilde{\mathbf{K}} = \tilde{\mathbf{K}}_{-} + \sigma^2 \mathbf{I}$. Then, proving the inequality
    \begin{align}
        \sigma^{2(\text{L})}_n  &= k_{nn} - \mathbf{k}_{n \mathbf{u}} \tilde{\mathbf{K}}^{-1} \mathbf{k}_{\mathbf{u} n} \nonumber  \\
        &\geq k_{nn} - \frac{1}{\sigma^2} \left(\mathbf{k}_{n \mathbf{u}} \mathbf{k}_{\mathbf{u} n} - 2 \mathbf{k}_{n \mathbf{u}} \mathbf{T} \tilde{\mathbf{K}}_{-} \mathbf{k}_{\mathbf{u} n} + \mathbf{k}_{n \mathbf{u}} \mathbf{T} \tilde{\mathbf{K}}_{-} \tilde{\mathbf{K}} \mathbf{T}\mathbf{k}_{\mathbf{u} n} \right) \nonumber \\
        &= L_n
    \end{align}
    is equivalent to showing that
    \begin{equation}
        \frac{1}{\sigma^2} \left(\mathbf{k}_{n \mathbf{u}} \mathbf{k}_{\mathbf{u} n} - 2 \mathbf{k}_{n \mathbf{u}} \mathbf{T} \tilde{\mathbf{K}}_{-} \mathbf{k}_{\mathbf{u} n} + \mathbf{k}_{n \mathbf{u}} \mathbf{T} \tilde{\mathbf{K}}_{-} \tilde{\mathbf{K}} \mathbf{T}\mathbf{k}_{\mathbf{u} n} \right) \geq \mathbf{k}_{n \mathbf{u}} \tilde{\mathbf{K}}^{-1} \mathbf{k}_{\mathbf{u} n}.
    \end{equation}
    We write $\mathbf{T} \mathbf{k}_{\mathbf{u} n} = \tilde{\mathbf{K}}^{-1} \mathbf{k}_{\mathbf{u} n} + \bm{\delta}$, where $\bm{\delta}$ is the discrepancy due to the error with which $\mathbf{T}$ approximates $\tilde{\mathbf{K}}^{-1}$.
    Notice that both $\tilde{\mathbf{K}}_{-}$ and $\tilde{\mathbf{K}}$ are symmetric PD.
    Moreover, since $\mathbf{T}$ is parameterised via its Cholesky factor $\mathbf{L}$ with $\mathbf{T} = \mathbf{L}\mathbf{L}^\top$, $\mathbf{T}$ is also symmetric PD.
    Then, we have
    \begin{align}
        &\frac{1}{\sigma^2} \left(\mathbf{k}_{n \mathbf{u}} \mathbf{k}_{\mathbf{u} n} - 2 \mathbf{k}_{n \mathbf{u}} \mathbf{T} \tilde{\mathbf{K}}_{-} \mathbf{k}_{\mathbf{u} n} + \mathbf{k}_{n \mathbf{u}} \mathbf{T} \tilde{\mathbf{K}}_{-} \tilde{\mathbf{K}} \mathbf{T}\mathbf{k}_{\mathbf{u} n} \right) \nonumber \\
        &\qquad = \frac{1}{\sigma^2} \left(\mathbf{k}_{n \mathbf{u}} \mathbf{k}_{\mathbf{u} n} - 2 (\tilde{\mathbf{K}}^{-1} \mathbf{k}_{\mathbf{u} n} + \bm{\delta})^\top \tilde{\mathbf{K}}_{-} \mathbf{k}_{\mathbf{u} n} + (\tilde{\mathbf{K}}^{-1} \mathbf{k}_{\mathbf{u} n} + \bm{\delta})^\top \tilde{\mathbf{K}}_{-} \tilde{\mathbf{K}} (\tilde{\mathbf{K}}^{-1} \mathbf{k}_{\mathbf{u} n} + \bm{\delta}) \right) \nonumber \\
        &\qquad = \frac{1}{\sigma^2} \left(\mathbf{k}_{n \mathbf{u}} \mathbf{k}_{\mathbf{u} n} - \cancel{2} \mathbf{k}_{n \mathbf{u}} \tilde{\mathbf{K}}^{-1} \tilde{\mathbf{K}}_{-} \mathbf{k}_{\mathbf{u} n} \cancel{- 2 \bm{\delta}^\top \tilde{\mathbf{K}}_{-} \mathbf{k}_{\mathbf{u} n}} \right. \nonumber \\ 
        &\qquad\qquad\qquad\qquad \left.+ \cancel{\mathbf{k}_{n \mathbf{u}} \tilde{\mathbf{K}}^{-1} \tilde{\mathbf{K}}_{-} \mathbf{k}_{\mathbf{u} n}} \cancel{+ 2 \bm{\delta}^\top \tilde{\mathbf{K}}_{-} \mathbf{k}_{\mathbf{u} n}} + \bm{\delta}^\top \tilde{\mathbf{K}}_{-} \tilde{\mathbf{K}} \bm{\delta} \right) \nonumber \\
        &\qquad = \frac{1}{\sigma^2} \left(\mathbf{k}_{n \mathbf{u}} \mathbf{k}_{\mathbf{u} n} - \mathbf{k}_{n \mathbf{u}} \tilde{\mathbf{K}}^{-1} (\tilde{\mathbf{K}} - \sigma^2 \mathbf{I}) \mathbf{k}_{\mathbf{u} n} + \bm{\delta}^\top \tilde{\mathbf{K}}_{-} \tilde{\mathbf{K}} \bm{\delta}\right) \nonumber \\
        &\qquad = \mathbf{k}_{n \mathbf{u}} \tilde{\mathbf{K}}^{-1} \mathbf{k}_{\mathbf{u} n} + \frac{1}{\sigma^2}\bm{\delta}^\top \tilde{\mathbf{K}}_{-} \tilde{\mathbf{K}} \bm{\delta} \nonumber \\
        &\qquad \geq \mathbf{k}_{n \mathbf{u}} \tilde{\mathbf{K}}^{-1} \mathbf{k}_{\mathbf{u} n},
    \end{align}
    with equality when $\bm{\delta} = \mathbf{0}$, which is implied by $\mathbf{T} = \tilde{\mathbf{K}}^{-1}$.
    The last inequality follows from the fact that the product of $\tilde{\mathbf{K}}_{-}$ and $\tilde{\mathbf{K}}$ is PSD, as it is the sum of two PSD matrices, i.e.,
    \begin{equation*}
        \tilde{\mathbf{K}}_{-} \tilde{\mathbf{K}} = \tilde{\mathbf{K}}_{-} (\tilde{\mathbf{K}}_{-} + \sigma^2 \mathbf{I}) = \tilde{\mathbf{K}}_{-}^2 + \sigma^2 \tilde{\mathbf{K}}_{-} \succeq \mathbf{0}.
    \end{equation*}
\end{proof}

As already discussed in \cref{sec:stopping}, the upper bound $U_n$ in \cref{eq:lsvgp_ub} and the lower bound $L_n$ in \cref{eq:lsvgp_lb} that we proved above can be monitored to choose the number of NG optimisation steps for $\mathbf{T}$. In particular, by subtracting $U_n$ and $L_n$, we find the quantity
\begin{align}
    G_n &= U_n - L_n \nonumber  \\
    &= k_{n \mathbf{u}} \mathbf{T} \tilde{\mathbf{K}}\mathbf{T}\mathbf{k}_{\mathbf{u}n} - 2 \mathbf{k}_{n\mathbf{u}}\mathbf{T}\mathbf{k}_{\mathbf{u}n} + \frac{1}{\sigma^2} \left(\mathbf{k}_{n \mathbf{u}} \mathbf{k}_{\mathbf{u} n} - 2 \mathbf{k}_{n \mathbf{u}} \mathbf{T} \tilde{\mathbf{K}}_{-} \mathbf{k}_{\mathbf{u} n} + \mathbf{k}_{n \mathbf{u}} \mathbf{T} \tilde{\mathbf{K}}_{-} \tilde{\mathbf{K}} \mathbf{T}\mathbf{k}_{\mathbf{u} n} \right) \nonumber \\
    &= \frac{1}{\sigma^2} \left(\mathbf{k}_{n \mathbf{u}} \mathbf{k}_{\mathbf{u} n} - 2 \mathbf{k}_{n \mathbf{u}} \mathbf{T} (\tilde{\mathbf{K}}_{-} + \sigma^2 \mathbf{I}) \mathbf{k}_{\mathbf{u} n} + \mathbf{k}_{n \mathbf{u}} \mathbf{T} (\tilde{\mathbf{K}}_{-} + \sigma^2 \mathbf{I}) \tilde{\mathbf{K}} \mathbf{T}\mathbf{k}_{\mathbf{u} n} \right) \nonumber \\
    &= \frac{1}{\sigma^2} ||\mathbf{k}_{\mathbf{u}n} - \tilde{\mathbf{K}} \mathbf{T}\mathbf{k}_{\mathbf{u} n}||^2 \nonumber \\
    &= \frac{1}{\sigma^2} ||(\mathbf{I} - \tilde{\mathbf{K}} \mathbf{T}) \mathbf{k}_{\mathbf{u}n}||^2.
\end{align}
By optimising $\mathbf{T}$ until $G_n < \epsilon'$, one can ensure that the R-SVGP predictive variance in \cref{eq:lsvgp_ub} is at most $\epsilon'$ larger than the L-SVGP predictive variance $\sigma^{2(\text{L})}_n$ in \cref{eq:pv}. Moreover, the only part of the SVGP ELBO in \cref{eq:elbo} that depends directly on the predictive variance $\sigma_n^2$ of the parameterisation is the expectation component, which, in the case of Gaussian likelihoods, is given by
\begin{equation}
    \sum_{n=1}^N \mathbb{E}_{\mathcal{N}(\mu_n, \sigma_n^2)} \left[\log p(y_n | f(\mathbf{x}_n)) \right] = -\frac{1}{2 \sigma^2_{\text{obs}}} \sum_{n=1}^N \sigma^2_n + c_3,
\end{equation}
where $\sigma^2_\text{obs}$ is the likelihood variance and $c_3$ is a constant that does not depend on $\sigma^2_n$. Therefore, as mentioned in the main text, optimising $\mathbf{T}$ until the stopping criterion
\begin{equation}
    \sum_{n=1}^N G_n \leq 2 \sigma^2_\text{obs} \epsilon
\end{equation}
is reached ensures that the slack in the ELBO of the R-SVGP parameterisation introduced by upper bounding the predictive variance $\sigma^{2(\text{L})}_n$ of the L-SVGP parameterisation is at most $\epsilon$.

The careful reader may have noticed that the stopping criterion derived above resembles a commonly used approach for iterative GP approximations \citep{mackay1997efficient,davies2015effective}. In particular, iterative frameworks for solving GPs approximate key quantities of exact GPs, such as their predictive mean
\begin{equation}
    \mu_* = \mathbf{k}_{* \mathbf{u}} \mathbf{K}_{\mathbf{f} \mathbf{f}}^{-1} \mathbf{y},
\end{equation}
by using iterative solvers (e.g., conjugate gradients) for the linear system $\mathbf{K}_{\mathbf{f}\mathbf{f}} \mathbf{x} = \mathbf{y}$, until a stopping criterion is reached. A common criterion \citep{mackay1997efficient} monitors the slack due to the approximation in the quadratic term of the true GP log marginal likelihood, i.e.,
\begin{equation}
    -\frac{1}{2} \mathbf{y}^\top \mathbf{K}_{\mathbf{f} \mathbf{f}}^{-1} \mathbf{y}, \label{eq:quad_lml}
\end{equation}
by employing upper and lower bounds on \cref{eq:quad_lml}, which resemble the bounds $U_n$ and $L_n$, with $\mathbf{k}_{\mathbf{u} n}$ replaced by $\mathbf{y}$.

\section{DERIVATION OF ALTERNATIVE INVERSE-FREE BOUND}\label{app:alternative_rsvgp}
As in \cref{sec:svgp}, start from M-SVGP and reparameterise the variational mean and covariance as $\mathbf{m} = \mathbf{K}_{\mathbf{u}\mathbf{u}} \mathbf{R} \tilde{\mathbf{m}}$ and $\mathbf{S} = \mathbf{K}_{\mathbf{u}\mathbf{u}} \mathbf{R} \tilde{\mathbf{S}} \mathbf{R}^\top \mathbf{K}_{\mathbf{u}\mathbf{u}}$, respectively, where $\tilde{\mathbf{S}}$ is PD and $\mathbf{R}$ is lower triangular with positive diagonal.
By plugging these choices into the expressions for the M-SVGP predictive mean and variance, this yields
\begin{equation*}
    \mu_n = \mathbf{k}_{n \mathbf{u}} \mathbf{R} \tilde{\mathbf{m}}, \quad \sigma_n^2 = k_{nn} - \mathbf{k}_{n \mathbf{u}} (\mathbf{K}_{\mathbf{u}\mathbf{u}}^{-1} - \mathbf{R}\tilde{\mathbf{S}}\mathbf{R}^\top) \mathbf{k}_{\mathbf{u} n}.
\end{equation*}

The predictive variance $\sigma_n^2$ above admits an inverse-free upper bound that depends on the auxiliary parameter $\mathbf{R}$.
In particular, since $\mathbf{K}_{\mathbf{u}\mathbf{u}}$ is PD, we have
\begin{equation*}
    (\mathbf{R}\mathbf{R}^\top - \mathbf{K}_{\mathbf{u}\mathbf{u}}^{-1}) \mathbf{K}_{\mathbf{u}\mathbf{u}} (\mathbf{R}\mathbf{R}^\top - \mathbf{K}_{\mathbf{u}\mathbf{u}}^{-1}) = \mathbf{R}\mathbf{R}^\top \mathbf{K}_{\mathbf{u}\mathbf{u}} \mathbf{R}\mathbf{R}^\top - 2 \mathbf{R}\mathbf{R}^\top + \mathbf{K}_{\mathbf{u}\mathbf{u}}^{-1} \succeq \mathbf{0},
\end{equation*}
which implies that
\begin{equation*}
    -\mathbf{K}_{\mathbf{u}\mathbf{u}}^{-1} \preceq \mathbf{R}\mathbf{R}^\top \mathbf{K}_{\mathbf{u}\mathbf{u}} \mathbf{R}\mathbf{R}^\top - 2 \mathbf{R}\mathbf{R}^\top.
\end{equation*}
By adding $\mathbf{R}\tilde{\mathbf{S}}\mathbf{R}^\top$ to both sides and plugging the resulting inequality into the expression for $\sigma_n^2$, we find that
\begin{equation*}
    \sigma_n^2 \leq k_{nn} + \mathbf{k}_{n \mathbf{u}} \mathbf{R} (\mathbf{R}^\top\mathbf{K}_{\mathbf{u}\mathbf{u}}\mathbf{R} - 2\mathbf{I} + \tilde{\mathbf{S}})\mathbf{R}^\top \mathbf{k}_{\mathbf{u} n},
\end{equation*}
with equality when $\mathbf{R} = \chol(\mathbf{K}_{\mathbf{u}\mathbf{u}}^{-1})$.
As mentioned in the main text, working backwards, the upper bound above is exactly the predictive variance induced by the alternative covariance choice $\mathbf{S} = \mathbf{K}_{\mathbf{u}\mathbf{u}} + \mathbf{K}_{\mathbf{u}\mathbf{u}}\mathbf{R}(\mathbf{R}^\top \mathbf{K}_{\mathbf{u}\mathbf{u}} \mathbf{R} - 2 \mathbf{I} + \tilde{\mathbf{S}})\mathbf{R}^\top\mathbf{K}_{\mathbf{u}\mathbf{u}}$.
This is easily checked by plugging such $\mathbf{S}$ into the expression for the M-SVGP predictive variance.

We now derive an upper bound on the resulting term $\KL[q(\mathbf{u}) || p(\mathbf{u})]$.
To start, define the shorthand $\mathbf{A} = \mathbf{R}^\top \mathbf{K}_{\mathbf{u}\mathbf{u}} \mathbf{R}$, and rewrite $\mathbf{S}$ as
\begin{equation*}
    \mathbf{S} = \mathbf{K}_{\mathbf{u}\mathbf{u}} \mathbf{R} \left(\mathbf{A} + \mathbf{A}^{-1} - 2\mathbf{I} + \tilde{\mathbf{S}}\right) \mathbf{R}^\top \mathbf{K}_{\mathbf{u}\mathbf{u}} = \mathbf{K}_{\mathbf{u}\mathbf{u}} \mathbf{R} \mathbf{B} \mathbf{R}^\top \mathbf{K}_{\mathbf{u}\mathbf{u}},
\end{equation*}
where $\mathbf{B} = \mathbf{A} + \mathbf{A}^{-1} - 2\mathbf{I} + \tilde{\mathbf{S}}$.
By plugging the expressions for $\mathbf{m}$ and $\mathbf{S}$ into the M-SVGP KL term and rearranging, we find that
\begin{align*}
    \KL\left[q(\mathbf{u}) || p(\mathbf{u}) \right] &= \frac{1}{2}\left(\text{tr}(\mathbf{K}_{\mathbf{u} \mathbf{u}}^{-1} \mathbf{S}) + \mathbf{m}^\top \mathbf{K}_{\mathbf{u} \mathbf{u}}^{-1} \mathbf{m} - M + \log |\mathbf{K}_{\mathbf{u} \mathbf{u}}| - \log |\mathbf{S}|\right) \\
    &= \frac{1}{2}\left(\text{tr}(\mathbf{R}\mathbf{B}\mathbf{R}^\top \mathbf{K}_{\mathbf{u} \mathbf{u}}) + \tilde{\mathbf{m}}^\top \mathbf{A} \tilde{\mathbf{m}} - M - \log |\mathbf{A}| - \log |\mathbf{B}| \right) \\
    &= \frac{1}{2}\left(\tr\left(\left(\mathbf{A} + \mathbf{A}^{-1} - 2\mathbf{I} + \tilde{\mathbf{S}}\right)\mathbf{A}\right) - \log|\mathbf{B}| + \tilde{\mathbf{m}}^\top \mathbf{A} \tilde{\mathbf{m}} - M - \log|\mathbf{A}|\right) \\
    &= \frac{1}{2}\left(\tr\left(\left(\mathbf{A} + \mathbf{A}^{-1} - 3\mathbf{I} + \tilde{\mathbf{S}}\right)\mathbf{A}\right) - \log|\mathbf{B}| + \tilde{\mathbf{m}}^\top \mathbf{A} \tilde{\mathbf{m}} + \tr\left(\mathbf{A} \right) - M - \log|\mathbf{A}|\right) \\
    &= \frac{1}{2}\left(\tr\left(\left(\mathbf{A} - 3\mathbf{I} + \tilde{\mathbf{S}}\right)\mathbf{A}\right) + M - \log|\mathbf{B}| + \tilde{\mathbf{m}}^\top \mathbf{A} \tilde{\mathbf{m}} + \tr\left(\mathbf{A} \right) - M - \log|\mathbf{A}|\right). \\
\end{align*}
By identifying the last three terms as $2 \cdot \KL\left[\mathcal{N}(\mathbf{0}, \mathbf{R}\mathbf{R}^\top) || \mathcal{N}(\mathbf{0}, \mathbf{K}_{\mathbf{u}\mathbf{u}}^{-1})\right]$, we find that
\begin{equation*}
    \KL\left[q(\mathbf{u}) || p(\mathbf{u}) \right] = \frac{1}{2}\left(\tr\left(\left(\mathbf{A} - 3\mathbf{I} + \tilde{\mathbf{S}}\right)\mathbf{A}\right) + M - \log|\mathbf{B}| + \tilde{\mathbf{m}}^\top \mathbf{A} \tilde{\mathbf{m}}\right) + \KL\left[\mathcal{N}(\mathbf{0}, \mathbf{R}\mathbf{R}^\top) || \mathcal{N}(\mathbf{0}, \mathbf{K}_{\mathbf{u}\mathbf{u}}^{-1})\right].
\end{equation*}
Out of all other terms, only $\log|\mathbf{B}|$ requires matrix decompositions. However, since
\begin{equation*}
    \mathbf{A} + \mathbf{A}^{-1} - 2\mathbf{I} = \mathbf{A}^{-1/2}\left(\mathbf{A}^2 + \mathbf{I} -2 \mathbf{A} \right)\mathbf{A}^{-1/2} = \mathbf{A}^{-1/2}(\mathbf{A} - \mathbf{I})^2 \mathbf{A}^{-1/2} \succeq \mathbf{0},
\end{equation*}
we have that $\mathbf{B} \succeq \tilde{\mathbf{S}}$, and thus $\log|\mathbf{B}| \geq \log|\tilde{\mathbf{S}}|$ with equality when $\mathbf{A} = \mathbf{I}$, which is implied by $\mathbf{R} = \chol(\mathbf{K}_{\mathbf{u}\mathbf{u}}^{-1})$.
By plugging this bound into the expression for $\KL[q(\mathbf{u}) || p(\mathbf{u})]$, we find that
\begin{equation*}
    \KL\left[q(\mathbf{u}) || p(\mathbf{u}) \right] \leq \frac{1}{2}\left(\tr\left(\left(\mathbf{A} - 3\mathbf{I} + \tilde{\mathbf{S}}\right)\mathbf{A}\right) + M - \log|\tilde{\mathbf{S}}| + \tilde{\mathbf{m}}^\top \mathbf{A} \tilde{\mathbf{m}}\right) + \KL\left[\mathcal{N}(\mathbf{0}, \mathbf{R}\mathbf{R}^\top) || \mathcal{N}(\mathbf{0}, \mathbf{K}_{\mathbf{u}\mathbf{u}}^{-1})\right],
\end{equation*}
with equality when $\mathbf{R} = \chol(\mathbf{K}_{\mathbf{u}\mathbf{u}}^{-1})$.

Finally, by plugging the expression for $\mathbf{A}$ back into the bound above, we obtain
\begin{align*}
    \KL\left[q(\mathbf{u}) || p(\mathbf{u}) \right] &\leq \frac{1}{2}\left(\tr\left(\left(\tilde{\mathbf{S}} - 3\mathbf{I} + \mathbf{R}^\top \mathbf{K}_{\mathbf{u}\mathbf{u}} \mathbf{R}\right)\mathbf{R}^\top \mathbf{K}_{\mathbf{u}\mathbf{u}} \mathbf{R}\right) + M - \log|\tilde{\mathbf{S}}| + \tilde{\mathbf{m}}^\top \mathbf{R}^\top \mathbf{K}_{\mathbf{u}\mathbf{u}} \mathbf{R} \tilde{\mathbf{m}}\right) \\ 
    &\qquad\qquad + \KL\left[\mathcal{N}(\mathbf{0}, \mathbf{R}\mathbf{R}^\top) || \mathcal{N}(\mathbf{0}, \mathbf{K}_{\mathbf{u}\mathbf{u}}^{-1})\right],
\end{align*}
as stated in the main text.

\section{EXPERIMENTAL DETAILS}\label{sec:experimental_details}
\subsection{Implementation}
We implement our methods in Python using TensorFlow \citep{abadi2016tensorflow}.
W-SVGP is the default SVGP parameterisation for GPflow \citep{matthews2017gpflow}, and we follow its structure to implement R-SVGP and L-SVGP. 
Their extension to deep architectures is based on GPflux \citep{dutordoir2021gpflux}.
Our implementation is available at \href{https://github.com/stefanocortinovis/relaxedgp/}{https://github.com/stefanocortinovis/relaxedgp/}.

The experiments on toy datasets (\cref{sec:efficacy}) were run on an Apple Silicon M4 Pro CPU with 24GB of memory, while all other experiments (\cref{sec:results}) were run on a single NVIDIA V100 GPU with 32GB of memory.

\subsection{Datasets}\label{sec:datasets}
For the experiments presented in \cref{sec:results}, we use datasets from four sources:
\begin{itemize}
    \item \textbf{UCI repository}\footnote{\href{https://archive.ics.uci.edu/datasets}{https://archive.ics.uci.edu/datasets}}: the \textsc{elevators} ($N = 16599$, $D = 18$) and \textsc{kin40k} ($N = 40000$, $D=8$) scalar regression datasets;
    \item \textbf{OpenML}\footnote{\href{https://www.openml.org/search?type=data}{https://www.openml.org/search?type=data}}: the \textsc{banana} ($N=5300$, $D=2$) classification dataset;
    \item \textbf{Keras}\footnote{\href{https://keras.io/api/datasets}{https://keras.io/api/datasets}}: the \textsc{mnist} ($N = 70000$, $D = 784$) classification dataset;
    \item \textbf{Other}: the \textsc{snelson} ($N = 200$, $D = 1$) regression dataset \citep{snelson2005sparse}.
\end{itemize}
For \textsc{snelson} and \textsc{banana} we fit on the full, unnormalised datasets with no held-out split. 
For \textsc{elevators} and \textsc{kin40k} we use a random $90/10$ train/test split and we standardise the data to have zero mean and unit variance feature-wise using train statistics.
For \textsc{mnist} we use the standard $60\mathrm{k}$/{$10\mathrm{k}$} train/test split and scale pixels to $[0,1]$ with no augmentation.

\subsection{Model Setup}
Here, we describe the model setup and initialisation for all the bounds considered (i.e., W-SVGP, L-SVGP, and R-SVGP) in the experiments presented in \cref{sec:results}.
Unless otherwise specified, model parameters shared by all bounds (e.g., kernel variances and likelihood variance) are initialised using the GPflow defaults.

For all the experiments, we use ARD RBF kernels \citep{williams2006gaussian}.
For the multi-class classification experiment on \textsc{mnist} (\cref{sec:generality}), we also include results for the weighted convolutional kernel, as implemented by \citet{vdw2017convolutional}.

Depending on the experiment, we use a suitable GPflow likelihood: Gaussian for regression, Bernoulli for binary classification, and RobustMax for multi-class classification.

Model setup specific to W-SVGP:
\begin{itemize}
    \item The variational mean $\tilde{\mathbf{m}}$ is initialised to $\tilde{\mathbf{m}} = \mathbf{0}$;
    \item The variational covariance $\tilde{\mathbf{S}}$ is parameterised via its Cholesky factor $\tilde{\mathbf{L}}$ with $\tilde{\mathbf{S}} = \tilde{\mathbf{L}} \tilde{\mathbf{L}}^\top$, and $\tilde{\mathbf{L}}$ is initialised to $\tilde{\mathbf{L}} = \mathbf{I}$.  
\end{itemize}

Model setup specific to L-SVGP:
\begin{itemize}
    \item Apart from the efficacy experiments on toy datasets (\cref{sec:efficacy}), we only consider the preconditioned version of the bound (\cref{sec:preconditioning});
    \item The variational mean $\tilde{\mathbf{m}}$ is initialised to $\tilde{\mathbf{m}} = \mathbf{0}$;
    \item The variational covariance $\tilde{\mathbf{S}}$ is parameterised via its diagonal $\tilde{\mathbf{s}}$ with $\tilde{\mathbf{S}} = \mathrm{diag}(\tilde{\mathbf{s}})$, and $\tilde{\mathbf{s}}$ is initialised to $\tilde{\mathbf{s}} = \alpha \mathbf{1}$ with $\alpha = 10^{-4}$. 
\end{itemize}

Model setup specific to R-SVGP:
\begin{itemize}
    \item Apart from the efficacy experiments on toy datasets (\cref{sec:efficacy}), we only consider the preconditioned version of the bound (\cref{sec:preconditioning});
    \item The variational parameters $\tilde{\mathbf{m}}$ and $\tilde{\mathbf{S}}$ are initialised as for L-SVGP;
    \item The matrix $\mathbf{T}$ is parameterised via its Cholesky factor $\mathbf{L}$ with $\mathbf{T} = \mathbf{L} \mathbf{L}^\top$, and $\mathbf{L}$ is initialised to $\mathbf{L} = \beta \mathbf{I}$ with $\beta = 10^{-3}$.
    \item For the efficiency experiments on the UCI datasets (\cref{sec:efficiency}), we estimate the trace terms in the bound using $K = 256$ random probe vectors, as discussed in \cref{sec:practical}.
\end{itemize}

The number $M$ of inducing points and the initialisation of the corresponding inducing location $\mathbf{Z}$ depend on the experiment performed:
\begin{itemize}
    \item Shallow regression/classification on toy datasets (\cref{sec:efficacy}): $M = 10$ for \textsc{snelson}, initialised on a uniform grid spanning the range of the training inputs, and $M = 64$ for \textsc{banana}, initialised using $k$-means\texttt{++} \citep{arthur2007kmeans} on the training inputs;
    \item Shallow regression on UCI datasets (\cref{sec:efficiency}): $M$ ranging from $1000$ to $4000$ for both \textsc{elevators} and \textsc{kin40k}, initialised using $k$-means\texttt{++} on the training inputs;
    \item Deep regression on \textsc{kin40k} (\cref{sec:generality}): $M = 500$ for each layer, initialised using the GPflux default (based on $k$-means\texttt{++});
    \item Multi-class classification on \textsc{mnist} (\cref{sec:generality}): $M = 750$ initialised by sampling uniformly without replacement from the training inputs.
\end{itemize}

As discussed in \cref{sec:generality}, for deep GP experiments, we range the number of layers from $1$ to $3$, with each intermediate layer $\ell$ having $Q^{(\ell)} = D$ outputs, where $D$ is the input dimension of the dataset.
Kernel hyperparameters and inducing locations are shared between outputs within each layer, as per GPflux defaults.
As discussed in \cref{sec:generality}, for deep GPs based on L-SVGP and R-SVGP, we also tie the variational covariance matrices $\tilde{\mathbf{S}}$ within each layer to match the cost of W-SVGP-based models.

\subsection{Training}\label{app:training}
Training details are specific to each experiment, and are described below.

\paragraph{Shallow Regression/Classification on Toy Datasets (\cref{sec:efficacy}).}
We keep the inducing locations $\mathbf{Z}$ fixed during training, use a mini-batch size $B$ equal to the number of inducing points $M$, and perform $10000$ training iterations.
The learning rate of Adam is kept fixed during training to $5 \times 10^{-3}$ for the \textsc{snelson} dataset, and to $1 \times 10^{-2}$ for the \textsc{banana} dataset.
For $\mathbf{T}$-optimisation in R-SVGP, a single NG update with step size $1.0$ is used at every training iteration.

\paragraph{Shallow Regression on UCI Datasets (\cref{sec:efficiency}).}
We use a mini-batch size $B = 100$ and perform $20000$ training iterations.
The learning rate of Adam is kept fixed during training to $5 \times 10^{-3}$.
For $\mathbf{T}$-optimisation in R-SVGP, we take advantage of the training heuristics discussed in \cref{sec:practical}.
In particular, $\mathbf{Z}$ is kept fixed for the first $1000$ training iterations, before being trained jointly with the other parameters using a $\mathbf{Z}$-specific Adam configuration with $\beta_1 = 0.99$ and learning rate starting $1 \times 10^{-3}$ and multiplied by $0.95$ every time a plateau in the training loss is reached for $100$ iterations.
As a result of these heuristics, $\mathbf{T}$ can be optimised using a single NG update with step size $1.0$ at every training iteration.

\paragraph{Deep Regression (\cref{sec:generality}).}
We use a mini-batch size $B = 1000$ and perform $200000$ training iterations.
The learning rate of Adam is initialised to $5 \times 10^{-3}$ and multiplied by $0.95$ every time a plateau in the training loss is reached for $1000$ iterations.
For $\mathbf{T}$-optimisation in R-SVGP, we use the stopping criterion \eqref{eq:stopping_frobenius} in \cref{sec:stopping} with tolerance $\epsilon = 10^{-9}$ and a log-linear step-size schedule \citep{salimbeni2018natural} starting from $1 \times 10^{-5}$ and increasing to $1.0$ over the first $10$ NG updates.

\paragraph{Multi-class Classification (\cref{sec:generality}).}
We follow the training setup of \citet{vdw2017convolutional}.
In particular, we use a mini-batch size $B = 200$ and perform $450000$ training iterations.
The learning rate of Adam is initialised to $10^{-3}$ and annealed using a piecewise-constant schedule with boundaries every $30000$ iterations and values multiplied by $10^{-1/3} \approx 0.464$ at each boundary.
For $\mathbf{T}$-optimisation in R-SVGP, we use the stopping criterion \eqref{eq:stopping_frobenius} in \cref{sec:stopping} with tolerance $\epsilon = 10^{-6}$ and a log-linear step-size schedule starting from $1 \times 10^{-5}$ and increasing to $1.0$ over the first $10$ NG updates.

\section{ADDITIONAL EXPERIMENTS}\label{sec:additional_experiments}
\subsection{Approximate Bound Evaluation via Hutchinson's Method}\label{app:hutchinson}
This section examines the runtime--accuracy trade-off of using Hutchinson's method to estimate the trace terms in the R-SVGP bound, as discussed in \cref{sec:practical}.
To this end, we consider the \textsc{elevators} and \textsc{kin40k} UCI regression datasets, and train R-SVGP using the same setup as in \cref{sec:efficiency} with $M = 2000$ inducing points and batch size $B = 100$, while varying the number of probe vectors $K$ used for trace estimation.
\Cref{fig:hutchinson} tracks the negative log-predictive density (NLPD) against training time on both datasets for different choices of $K$.
\begin{figure*}
    \centering
    \includegraphics[width=1.0\textwidth]{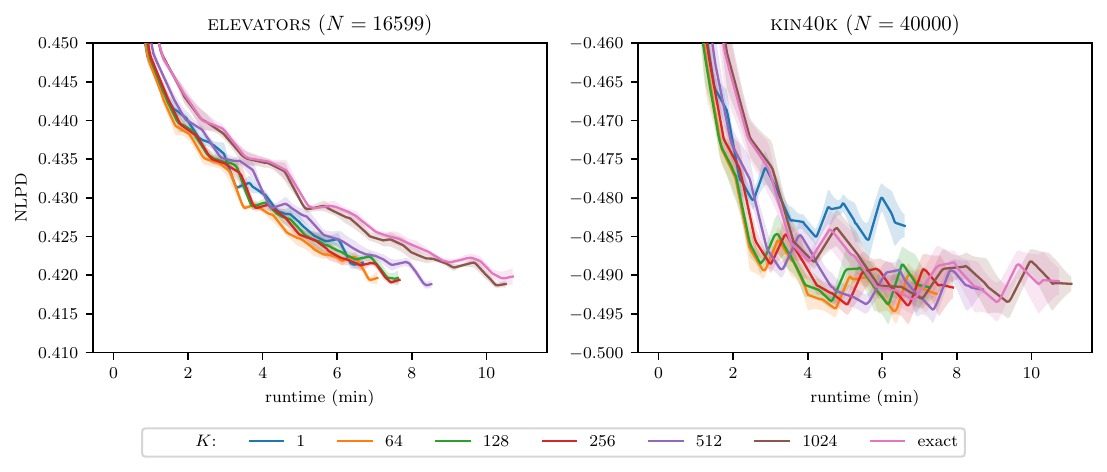}
    \caption{NLPD/runtime on \textsc{elevators} and \textsc{kin40k} datasets for different choices of the number of probes $K$, with $M = 2000$ and $B = 100$. Lines show the mean over 5 seeds; shaded regions indicate $\pm 1$ standard error.}
    \label{fig:hutchinson}
\end{figure*}
As expected, using a single probe vector ($K = 1$) leads to noticeably noisier optimisation, especially on \textsc{kin40k}, and to worse final NLPD.
In contrast, for $K \geq 64$, the final NLPD after training is very similar between runs using approximate and exact trace estimation, while runtime is substantially reduced relative to larger-$K$ runs.
This indicates that Hutchinson's method can provide stable training and good predictive performance with a moderate number of probe vectors in this setting.
For instance, the choice $K = 256$ used in \cref{sec:efficiency} reduces total runtime by approximately $25$--$30\%$ relative to exact trace estimation.

\end{document}